\def\eqref#1{equation~\ref{#1}}
\def\1{\bm{1}}
\DeclareMathAlphabet{\mathsfit}{\encodingdefault}{\sfdefault}{m}{sl}
\SetMathAlphabet{\mathsfit}{bold}{\encodingdefault}{\sfdefault}{bx}{n}
\definecolor{darkmagenta}{rgb}{0.56, 0.0, 1.0}  
\definecolor{lineBlue}{RGB}{31,119,180}   
\definecolor{lineGreen}{RGB}{44,160,44}   
\definecolor{pastelBlue}{RGB}{198,219,239}   
\definecolor{pastelGreen}{RGB}{204,235,197}  
\newcommand{\hlSmall}[1]{\sethlcolor{pastelBlue}\hl{#1}}
\newcommand{\hlMedium}[1]{\sethlcolor{pastelGreen}\hl{#1}}
\newtheorem{theorem}{Theorem}
\title{Simplicial Embeddings Improve Sample\\ Efficiency in Actor–Critic Agents}
\author{%
Johan Obando-Ceron\textsuperscript{*1,2}\quad
Walter Mayor\textsuperscript{*3}\quad
Samuel Lavoie\textsuperscript{1,2}\quad
Scott Fujimoto\textsuperscript{1,4}\\[0.1em]
\textbf{Aaron Courville\textsuperscript{1,2,5}}\quad
\textbf{Pablo Samuel Castro\textsuperscript{1,2}}\\[1em]
\textsuperscript{*}Equal contribution\quad
\textsuperscript{1}Mila -- Qu\'ebec AI Institute\quad
\textsuperscript{2}Universit\'e de Montr\'eal\\
\textsuperscript{3}Independent Researcher\quad
\textsuperscript{4}McGill University\quad
\textsuperscript{5}CIFAR AI Chair\\[1em]
\texttt{\{jobando0730, waltermayor, samuel.lavoie.m\}@gmail.com}\\[4pt]
\texttt{\{courvila, pablo-samuel.castro\}@mila.quebec}\\[4pt]
\texttt{scott.fujimoto@mail.mcgill.ca}%
}
\definecolor{myblue}{RGB}{0,80,200}
\newcommand{\change}[1]{\textcolor{myblue}{#1}}
\begin{document}

\maketitle

\begin{abstract}
Recent works have proposed accelerating the wall-clock training time of actor-critic methods via the use of large-scale environment parallelization; unfortunately, these can sometimes still require large number of environment interactions to achieve a desired level of performance. Noting that well-structured representations can improve the generalization and sample efficiency of deep reinforcement learning (RL) agents, we propose the use of \emph{simplicial embeddings}: lightweight representation layers that constrain embeddings to simplicial structures. This geometric inductive bias results in sparse and discrete features that stabilize critic bootstrapping and strengthen policy gradients. 
When applied to FastTD3, FastSAC, and PPO, simplicial embeddings consistently improve sample efficiency and final performance across a variety of continuous- and discrete-control environments, without any loss in runtime speed.  \href{https://github.com/waltermayor/FastTD3_SEm.git}{\textbf{Source code here.}}
\end{abstract}

\begin{center}
\begin{minipage}{0.75\linewidth}
\itshape
``Order is not imposed from the outside, but emerges from within\footnotemark.''

\hfill --- Ilya Prigogine
\end{minipage}
\end{center}

\footnotetext{This perspective resonates with deep RL: stability cannot be forced solely through more compute, heavier regularizers, or larger critics. Instead, inductive biases that shape the geometry of representations can allow order to \emph{emerge from within}, leading to more stable critics and more efficient policies under non-stationarity.}

\section{Introduction}
\label{sec:intro}

Deep reinforcement learning (deep RL) has delivered impressive progress in continuous control, enabling agile locomotion \citep{smith2022walkparklearningwalk,zhuang2023robot,margolis2024rapid} and dexterous manipulation \citep{popov2017dataefficient,akkaya2019solving,human-in-the-loop}. Yet a persistent tension remains between \emph{training speed} (wall-clock efficiency) and \emph{sample efficiency} (the number of environment interactions). Some modern agents such as TD-MPC2~\citep{hansen2023td} and SR-SPR/BBF~\citep{d2022sample,schwarzer2023bigger} achieve strong returns with relatively few interactions, but demand substantial compute and engineering complexity. In contrast, recent fast actor--critic variants have scaled throughput with massive parallelization \citep{li2023parallel,singlasapg,gallicisimplifying,seo2025fasttd3}. While methods such as FastTD3~\citep{seo2025fasttd3} rapidly solve humanoid benchmarks, they require far more interactions to reach comparable performance.
Similar limitations have been observed in Parallel Q-Learning~\citep{li2023parallel} and large-scale actor--critic frameworks such as IMPALA and SEED RL~\citep{espeholt2018impala, Espeholt2020SEED}. This trade-off limits applicability in domains where interactions are expensive and time is constrained, such as robotics.

A natural objection is that, in modern simulators, environment steps are cheap and can be generated in massive parallel batches, so sample efficiency is less important. However, this view overlooks several practical and scientific concerns. First, algorithms that are data-hungry in simulation rarely transfer well to real-world scenarios \citep{tobin2017domain,akkaya2019solving}. Second, large-scale parallelization requires substantial compute and energy resources, raising both efficiency and sustainability concerns \citep{schwartz2020green,henderson2020towards}. Third, sample efficiency is closely tied to generalization: agents that exploit structure from fewer trajectories tend to be more robust under distributional shifts \citep{zhang2018study,yao2025offline}. Moreover, in high-dimensional simulators such as Isaac Gym, each step can be significantly more expensive, compounding inefficiency as tasks grow harder \citep{makoviychuk2isaac,rudin2021learning}. These issues highlight why sample efficiency remains central even in the era of massively parallel deep RL.

Shaping representations with auxiliary losses \citep{anand2019unsupervised,laskin2020curl,schwarzerdata,castro2021mico,fujimoto2023sale} has been shown to improve sample efficiency in deep RL. However, such methods increase algorithmic complexity and add computational overhead through extra forward and backward passes \citep{fujimoto2023sale}. Alternatively, architectural components, such as convolutions~\citep{fukushima:neocognitronbc,conv_lecun} and attention~\citep{bahdanau2016neuralmachinetranslationjointly}, can be used to induce structure leading to desirable downstream properties. 

Discrete and sparse representations have several desirable properties in comparison to their dense and continuous counterparts. Notably, sparse and discrete representations increase robustness to noise~\citep{donoho_sparse}, training stability by reducing catastrophic interference~\citep{liu_sparserl_2019}, sample efficiency~\citep{fumero2023leveraging}, interpretability~\citep{murphy-etal-2012-learning,lavoiesimplicial,wabartha2024piecewise} and improved generative modeling~\citep{lavoie2025compositionaldiscretelatentcode}. In this work, we posit that several of those properties are beneficial in the context of reinforcement learning. Within RL, several successful agents like TD\mbox{-}MPC2, Dreamer V2/V3, IQRL, and MAD\mbox{-}TD~\citep{hansen2023td,alonso2024diffusion,hafner2023mastering,scannell2024iqrl,voelcker2025madtd} use discrete latents; yet they typically rely on auxiliary model-based losses to shape the representation.

While several methods exist for learning discrete representations explicitly~\citep{jang2017categoricalreparameterizationgumbelsoftmax,maddison2017concretedistributioncontinuousrelaxation,oord2018neuraldiscreterepresentationlearning}, these methods use straight-through estimation~\citep{bengio2013estimatingpropagatinggradientsstochastic} which is a biased gradient estimator. Fortunately, discretization may be implicitly induced via Simplicial Embeddings (SEM)~\citep{lavoiesimplicial}, an architectural component that partitions a latent representation into a sequence of $L$ simplices. SEM is fully differentiable, thus avoiding the negative effect of explicit discretization while enacting some of the desirable properties of discrete and sparse representations. Concretely, we show that SEM improves both data efficiency and asymptotic performance across diverse environments such as Isaac Gym~\citep{makoviychuk2isaac}, HumanoidBench~\citep{sferrazza2024humanoidbench}, and the Arcade Learning Environment~\citep{bellemare2013arcade}, while preserving (and often improving) wall-clock speed.

\section{Preliminaries}
\label{sec:preliminaries}

\subsection{Actor–Critic Reinforcement Learning}
We consider a standard Markov decision process (MDP) defined by the tuple 
$\mathcal{M} = (\mathcal{S}, \mathcal{A}, P, r, \gamma)$, 
with state space $\mathcal{S}$, action space $\mathcal{A}$, transition distribution 
$P(s'|s,a)$, reward function $r: \mathcal{S}\times\mathcal{A}\to \mathbb{R}$, 
and discount factor $\gamma \in [0,1)$. 
The objective is to maximize the expected discounted return $J(\pi) = \mathbb{E}_{\pi} \Big[ \sum_{t=0}^{\infty} \gamma^t r(s_t,a_t) \Big]$, where the agent follows a policy $\pi(a|s)$. Actor--critic methods maintain both a parameterized policy $\pi_\theta(a|s)$ (the actor) and an action-value function $Q_\phi(s,a)$ (the critic). The critic is trained to minimize the Bellman error
\begin{equation}
    \mathcal{L}_Q(\phi) = \mathbb{E}_{(s,a,r,s') \sim \mathcal{D}}  \Big[ \big(Q_\phi(s,a) - y\big)^2 \Big],
    \qquad 
    y = r + \gamma \, \mathbb{E}_{a' \sim \pi_\theta(\cdot|s')} \big[ Q_{\phi^-}(s',a') \big],
\end{equation}
where $\phi^-$ denotes target network parameters and $\mathcal{D}$ is a replay buffer. The actor is updated via the policy gradient defined as $
    \nabla_\theta J(\pi_\theta) = 
    \mathbb{E}_{s \sim \mathcal{D}, a \sim \pi_\theta} \big[ \nabla_\theta \log \pi_\theta(a|s) \, Q_\phi(s,a) \big].
$ While this can be effective,  bootstrapped training is notoriously fragile. Errors in $Q_\phi$ propagate recursively through the target $y$, and when the representation used to compute $Q_\phi$ is poorly conditioned,  these errors amplify and cause divergence or collapse \citep{fujimoto2018addressing}.

A recent line of work has sought to reduce the \emph{wall-clock} cost of actor--critic training. 
FastTD3~\citep{seo2025fasttd3} builds on TD3 \citep{fujimoto2018addressing} by leveraging 
(i) parallel simulation across many environment instances, 
(ii) large-batch critic updates, and 
(iii) algorithm design choices like distributional critics (C51) \citep{bellemare2017distributional}, noise scaling and clipped double Q-learning (CDQ) \citep{fujimoto2018addressing}. 
Together, these design choices enable high-throughput training while retaining stable convergence, although FastTD3~\citep{seo2025fasttd3}  still remains sample-inefficient (see \autoref{appx:preliminaries} for more details).

Policies and critics often rely on latent representations extracted from raw states \citep{lesort2018state}. Formally, an encoder $f_\psi : \mathcal{S} \to \mathbb{R}^d$ maps observations $s$ into embeddings $z = f_\psi(s)$, which are then consumed by either the critic, the actor, or both depending on the architecture. Some methods share a common encoder across actor and critic (e.g., SAC~\citep{haarnoja2018soft}, DrQ~\citep{yarats2021image}, DrQ-v2~\citep{yarats2022mastering}), while others (e.g., DDPG~\citep{lillicrap2015continuous}, FastTD3~\citep{seo2025fasttd3}) maintain separate encoders. Regardless of parameter sharing, these representations play a central role in learning \citep{garcin2025studying}. The critic estimates values $Q_\phi(s,a) \equiv Q_\phi(f_\psi(s),a)$, and the actor conditions its policy $\pi_\theta(a|s) \equiv \pi_\theta(a|f_\psi(s))$ on the chosen embedding.  Ideally, $z$ should preserve the Markov property and expose predictive features of the reward $r$ and dynamics~$P$.  

Yet the choice and stability of such embeddings is far from guaranteed.  
When unconstrained, learned representations can introduce severe pathologies that destabilize value learning. 
For example, if $\|f_\psi(s)\| \to \infty$, critics may extrapolate to arbitrarily large Q-values outside 
the support of the replay buffer, inflating the Bellman error.  
Formally, if $Q_\phi(z,a) = w^\top z + b$ with linear heads, then $\|Q_\phi\| \to \infty$ as 
$\|z\| \to \infty$, leading to exploding targets $y$ and divergent gradients.  
Similarly, if $z$ exhibits strong correlations or degenerate directions, the critic’s regression problem 
becomes ill-conditioned: the covariance matrix $\Sigma = \mathbb{E}[zz^\top]$ may approach singularity, 
amplifying variance in temporal-difference updates.  
These phenomena are empirically linked to representation collapse, where value estimates drift 
irrecoverably and policy updates follow unstable gradients~\citep{moalla2024no,castanyer2025stable}.

\subsection{Simplicial Embeddings}

Simplicial embeddings~\citep[SEM; ][]{lavoiesimplicial} provide a lightweight inductive bias on representation geometry by constraining latent codes to lie on a product of simplices.  
Concretely, given encoder outputs $f_\psi(s) \in \mathbb{R}^{L \times V}$, the latent vector is partitioned into $L$ groups of size $V$, and a softmax is applied within each group:
\begin{equation}
    \tilde{z}_{\ell,v} = 
    \frac{\exp(z_{\ell,v} / \tau)}{\sum_{v'=1}^V \exp(z_{\ell,v'} / \tau)}, 
    \quad 
    \forall \ell \in \{1,\ldots,L\}, \; v \in \{1,\ldots,V\},
\end{equation}
where $\tau > 0$ is a temperature parameter controlling the degree of sparsity.  The resulting embedding $\tilde{z}$ lies in the product space $\Delta^{V-1} \times \cdots \times \Delta^{V-1}$, i.e., $L$ categorical distributions of dimension $V$. This transformation ensures boundedness through group-wise normalization, induces sparsity as softmax competition (sharpened at low $\tau$) drives near one-hot encodings, and promotes group structure by partitioning features into modular subspaces akin to mixtures-of-experts~\citep{shazeer2017,ceron2024mixtures,willi2024mixture}. In self-supervised learning and downstream classification, SEM has been shown to stabilize training and improve generalization, particularly in low-label and transfer settings~\citep{lavoiesimplicial}. SEM does not rely on auxiliary losses or reconstruction terms; akin to an activation function, it only modifies the embedding geometry with the group-wise softmax, limiting computational overhead. 

\section{Non-Stationarity Amplifies Representation Collapse}

Several works have shown that non-stationarity can lead to severe degradation of learned representations across different domains \citep{lyle2022learning,kumarimplicit,lyle2025disentangling,castanyer2025stable}. In supervised learning, label noise and distribution shifts can induce representation collapse, where features lose diversity and neurons become inactive \citep{li2022selective,sokar2023dormant,dohare2024loss}. Similar observations have been made in deep RL: \emph{the constantly changing data distribution, induced by an evolving policy, exacerbates this phenomenon, often resulting in unstable critics and poor generalization} \citep{nauman2024overestimation, kumarimplicit}. These studies suggest that collapse is not an isolated pathology of specific architectures, but a general failure mode that emerges when training signals are non-stationary. In \autoref{appex:formal_analysis} we provide a formal analysis that demonstrates the relationship between non-stationarity and neuron dormancy.

\paragraph{A demonstration on CIFAR-10.} We illustrate this phenomenon with a toy experiment on CIFAR-10 \citep{krizhevsky2009learning}. We compare two training regimes: (i) a stationary setting with fixed labels, and (ii) a non-stationary setting where labels are periodically shuffled to mimic \emph{the bootstrap dynamics} of deep RL \change{\citep{castanyer2025stable}.} Let $(x,y)$ be training samples with $y \in \{1,\ldots,K\}$. In the stationary regime, targets are fixed, so the conditional distribution $p(y|x)$ is constant and the empirical risk minimizer $\theta_t^\star$ remains stable up to stochastic fluctuations. In the non-stationary regime, labels are periodically shuffled so that $y \mapsto \pi_t(y)$, where $\pi_t$ is a permutation applied every $T$ steps. This induces inflection points in the minimizer, 
shifting whenever $\pi_t$ changes. \autoref{fig:metrics_motiv} shows that in the stationary regime, training is stable: losses decrease smoothly, dormant neuron rates remain low, and effective rank increases, indicating robust representation learning \citep{dohare2024loss,sokar2023dormant,liu2025measure}. In contrast, in the non-stationary regime, we observe instability: oscillating losses, rising neuron dormancy, and collapsing feature rank. Even in this simple supervised setting, instability in the target distribution alone is sufficient to undermine representational integrity. Similar optimization instabilities have been observed in prior work when evaluating CIFAR-10 under non-stationary conditions \citep{igl2021transient,Lee2023PLASTICII,galashov2024non}.

\begin{figure*}[!t]
    \centering
    \includegraphics[width=\textwidth]{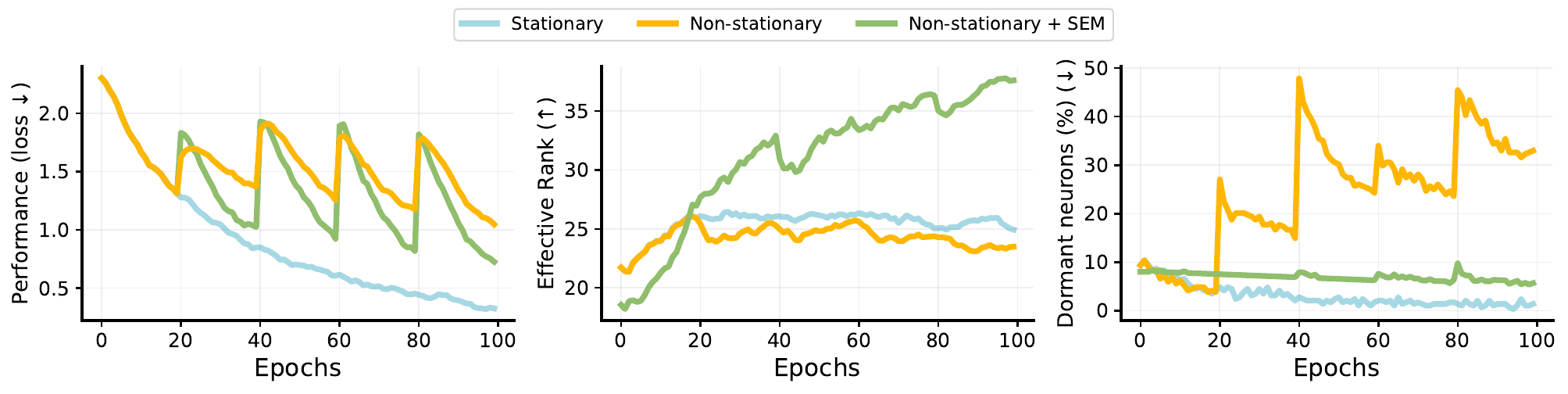}%
    \vspace{-0.5cm}
    \caption{\textbf{Training dynamics on CIFAR-10 with stationary vs.\ non-stationary targets.}  In the {\color{cyan} stationary regime} (fixed targets), losses decrease smoothly, neuron dormancy and effective rank remains controlled, suggesting stable representation learning. In the {\color{orange} non-stationary regime} (targets shuffled every 20 epochs), the model exhibits higher variance in losses, increased dormant neuron rates, and reduced effective rank. The {\color{OliveGreen} addition of SEM}  mitigates this instability. Experiments are averaged over 3 independent seeds, with shaded areas reporting 95\% confidence intervals.}
    \label{fig:metrics_motiv}
    \vspace{-0.2cm}
\end{figure*}

\paragraph{Stabilizing Representations under Non-Stationarity with SEM} 

\emph{Simplicial Embeddings (SEM)} can mitigate this effect by projecting features onto a structured space that prevents collapse. The transformation enforces energy preservation; since each block has unit mass, representations cannot vanish and $\operatorname{tr}(\Sigma_t)$ remains bounded away from zero. It also promotes diversity, as intra-block competition spreads information across coordinates, while multiple blocks ($L$) increase effective rank, counteracting covariance deflation. As shown in \autoref{fig:metrics_motiv}, critics trained with SEM retain higher effective rank,  larger gradient energy, and lower neuron dormancy even when targets drift.

\begin{tcolorbox}[colback=orange!15,
leftrule=0.5mm,top=0mm,bottom=0mm] \textbf{Takeaways:}
\begin{itemize}[left=12pt]
  \item Non-stationarity exacerbates representation collapse, as evidenced by increased neuron dormancy and reduced effective rank.
  \item Simplicial Embeddings (SEM) introduce a simplex-based geometric prior that sustains feature diversity and prevent feature collapse.
\end{itemize}
\end{tcolorbox}

\section{Understanding the impact of SEM on Deep RL networks}
\label{sec:understanding}
In actor–critic methods such as FastTD3, the critic is trained against bootstrapped targets $
y_t(s,a) = r(s,a) + \gamma \, Q_{\phi^-}(s', \pi_\theta(s')).
$ Both the target distribution $\mathcal{D}_t$ (samples $(s,a,r,s')$ from the replay buffer) and the target value $y_t$ evolve as the policy $\pi_\theta$ is updated. 
This continual drift produces a persistent bias term in $
b_t = \nabla \mathcal{L}_{t+1}(\theta_t^\star) 
= \mathbb{E}_{(s,a) \sim \mathcal{D}_{t+1}} \!\Big[ 
\big(Q_\phi(s,a) - y_{t+1}(s,a)\big) \,\nabla_\theta Q_\phi(s,a) 
\Big]$, which is nonzero whenever $\pi_\theta$ or $\mathcal{D}_t$ changes. Thus, the critic is never optimizing a fixed objective but is instead forced to chase a moving target. 

Representation collapse under such non-stationarity poses a fundamental barrier to stable and efficient deep RL (see \autoref{sec:relatedwork} for additional context). Standard actor--critic methods are particularly vulnerable. The critic’s representations are trained against drifting targets, and the actor in turn depends on those representations to update its policy. This tight coupling amplifies instability, leading to poor sample efficiency in continuous control tasks. To address this challenge, we evaluate \emph{Simplicial Embeddings (SEM)} as a representation-level regularizer. SEM aims to encourage the hidden features of both actor and critic networks to maintain a well-structured geometric organization, preventing collapse and preserving diversity.

\paragraph{Setup.} Because this section involves a large number of ablations and is computationally expensive, we restrict experiments to five benchmarks from the Humanoid suite \citep{sferrazza2024humanoidbench}, evaluated on \citep{seo2025fasttd3}. These tasks share the same robot-state dimension. We report aggregate performance across the five tasks and six seeds, with shaded areas reporting 95\% confidence intervals, and provide full details in \autoref{appx:ablations_setup}.

\paragraph{Integrating SEM on Actor-Critic Algorithms.}
We choose FastTD3 \citep{seo2025fasttd3}, as our primary testbed. FastTD3 is specifically designed to be a simple and compute-efficient baseline for continuous-control and humanoid benchmarks. Its streamlined architecture yields strong performance while significantly reducing wall-clock training time. At the same time, FastTD3 inherits the critic-driven weaknesses of TD3; its bootstrapped value targets are generated online by the actor, making the critic susceptible to non-stationarity. This coupling amplifies representation collapse, as instabilities in the critic propagate to both value estimates and policy updates. We conduct most of our ablations on FastTD3, while later sections demonstrate that the benefits of SEM also extend to other actor–critic algorithms, such as SAC \citep{haarnoja2018soft} and PPO \citep{schulman2017proximal}.

\begin{figure*}[!t]
    \centering
    \includegraphics[width=\textwidth]{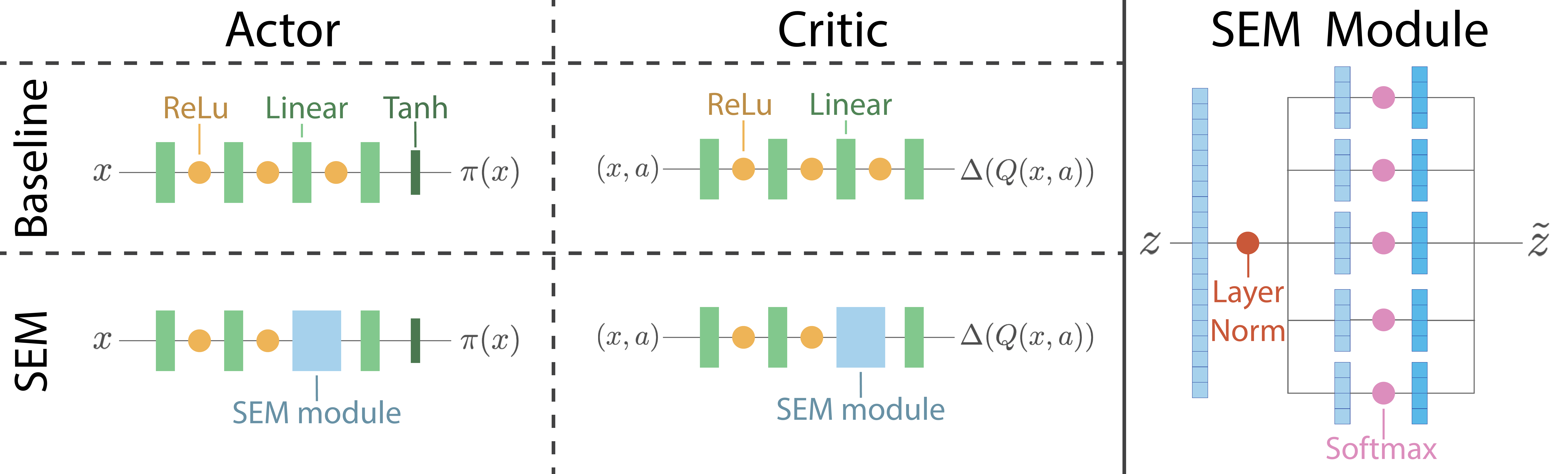}
    \vspace{-0.7cm}
    \caption{\textbf{Actor–critic network architecture with SEM.} The actor (left) and critic (middle) architectures are modified with a SEM module, which partitions features into groups and applies group-wise softmax (right panel), constraining them to a product of simplices.}
    \label{fig:sem-actor-critic}
    \vspace{-0.4cm}
\end{figure*}

SEM can either be applied to the actor, the critic, or both. We build on prior work showing that the penultimate layer plays a critical role in representation quality~\citep{ceron2024mixtures,sokar2025mind}, and that regularizing this layer can yield substantial performance gains. \autoref{fig:sem-actor-critic} illustrates how SEM is integrated into the actor–critic networks of FastTD3.  For the critic, SEM replaces the baseline linear head with a structured projection, regularizing value estimates in the distributional C51 setting. 
For the actor, SEM is applied at the penultimate layer before the final linear+$\tanh$, 
ensuring that the policy is conditioned on bounded and sparse features. Across the paper, \hlSmall{dashed blue (\textbf{blue}, - -)} curves indicate the baseline, while \hlMedium{solid green, (\textbf{green}, ---)} curves represent the interventions added to the baseline.
\autoref{fig:sem_actor_critic_both} shows clear gains when applying SEM to the actor or to both actor and critic, and more moderate gains when applied only to the critic. 
Although different SEM dimensions ($V$) improve sample efficiency and asymptotic performance, $V=64$ appears most effective.  We further explore the relationship between $L$ and $V$ (see sec 
\ref{sub:sem_parametes_deeprl}), as this tradeoff was a central focus of the original SEM study~\citep{lavoiesimplicial}. These results echo the non-stationary CIFAR-10 experiment, where SEM prevented feature collapse and stabilized learning (see \autoref{fig:metrics_motiv}).

\paragraph{The Effect of SEM on Learning Dynamics in Deep RL.}
We empirically evaluate the impact of SEM on the stability and efficiency of actor–critic algorithms.  Our analysis combines both \emph{learning performance} (returns, losses, TD error, critic disagreement) and \emph{representation quality} (effective rank, feature norms), allowing us to connect sample-efficiency gains to underlying representational dynamics.  This dual perspective highlights not only \emph{whether} SEM improves performance, but also \emph{why} it stabilizes 
training. A detailed explanation of each metric is provided in \autoref{appx:metrics}.

To understand \emph{why} SEM improves performance, we turn to representation-level diagnostics. 
\autoref{fig:fasttd3_metrics_representations} shows that SEM increases the effective rank of actor features, and bounds actor feature norms. Late in training, SEM also lifts the critic effective rank, a signs of more expressive and robust value learning. 
High effective rank is a proxy for avoiding representational collapse \citep{moalla2024no,mayorimpact}. In the deep RL literature, representation collapse under drift has been empirically associated with capacity loss \citep{lyle2021understanding}, deterioration of feature rank \citep{kumar2021dr}, and implicit under-parameterization \citep{kumarimplicit}. In supervised and self-supervised settings, techniques like orthogonality regularization and rank-preserving weight regularizers are used to prevent feature collapse \citep{he2024preventing}. These representational patterns align with our formal analysis, showing that SEM prevents covariance deflation and sustains gradient energy, thereby preventing feature collapse and boosting performance.

\begin{figure*}[!t]
    \centering
    \includegraphics[width=\textwidth]{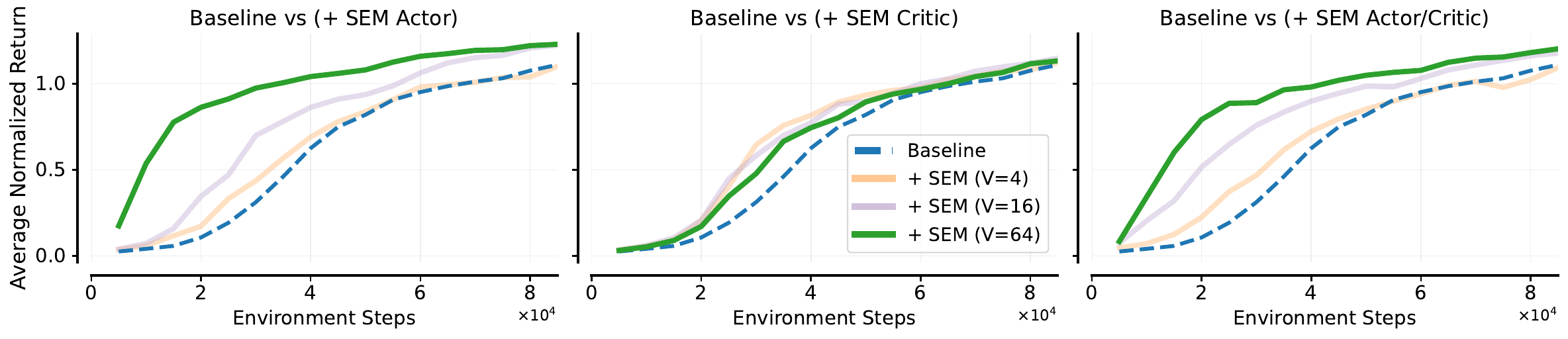}%
    \vspace{-0.4cm}
    \caption{\textbf{Average normalized return on $5$ HumanoidBench tasks over $6$ seeds.}  Baseline agent (\hlSmall{\textbf{blue}, - -}) vs.\ SEM variants applied to actor, critic, or both. Each curve corresponds to an embedding dimension; \hlMedium{$dim=64$ (\textbf{green}, ---)} is highlighted. SEM accelerates early learning and improves asymptotic performance, with $dim=64$ giving the most stable gains. In the three figures we use $L=2$ for the SEM module.}
    \label{fig:sem_actor_critic_both}
    \vspace{-0.3cm}
\end{figure*}

 \begin{figure*}[!t]
    \centering
    \includegraphics[width=\textwidth]{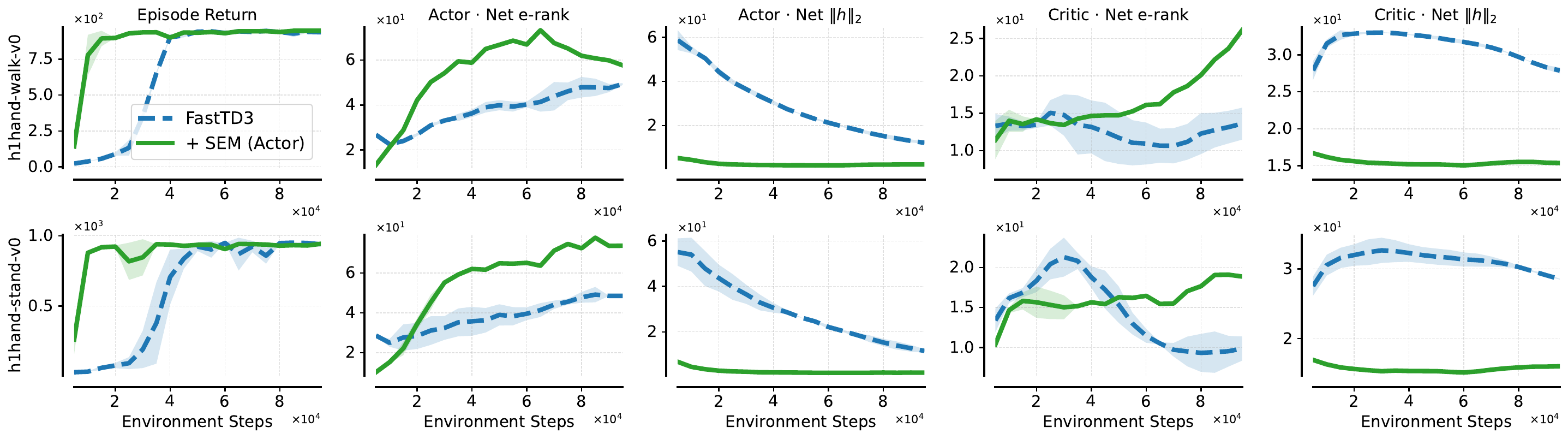}%
    \vspace{-0.4cm}
    \caption{\textbf{Learning and representation diagnostics on $2$ HumanoidBench tasks over $6$ seeds.} SEM reaches high return earlier, raises actor/critic effective rank, and keeps actor features compact.}
    \label{fig:fasttd3_metrics_representations}
    \vspace{-0.25cm}
\end{figure*}

As shown in \autoref{fig:fasttd3_metrics}, SEM improves optimization stability over the baseline. Agents with SEM achieve higher returns earlier and maintain smaller, more stable TD errors, reduced critic disagreement, and lower critic-distribution discrepancy. Such effects are crucial, as instability in bootstrapped critics is a primary failure mode of actor–critic methods \citep{fujimoto2019off,kumarimplicit}. By constraining representation geometry, SEM produces better-conditioned features that yield more calibrated value estimates, echoing similar findings in representation regularization for deep RL \citep{anand2019unsupervised,laskin2020curl,schwarzerdata}. These results indicate that SEM not only accelerates learning but also yields more calibrated value estimates, mitigating instability in bootstrapped critics.

\begin{figure*}[!t]
    \centering
    \includegraphics[width=\textwidth]{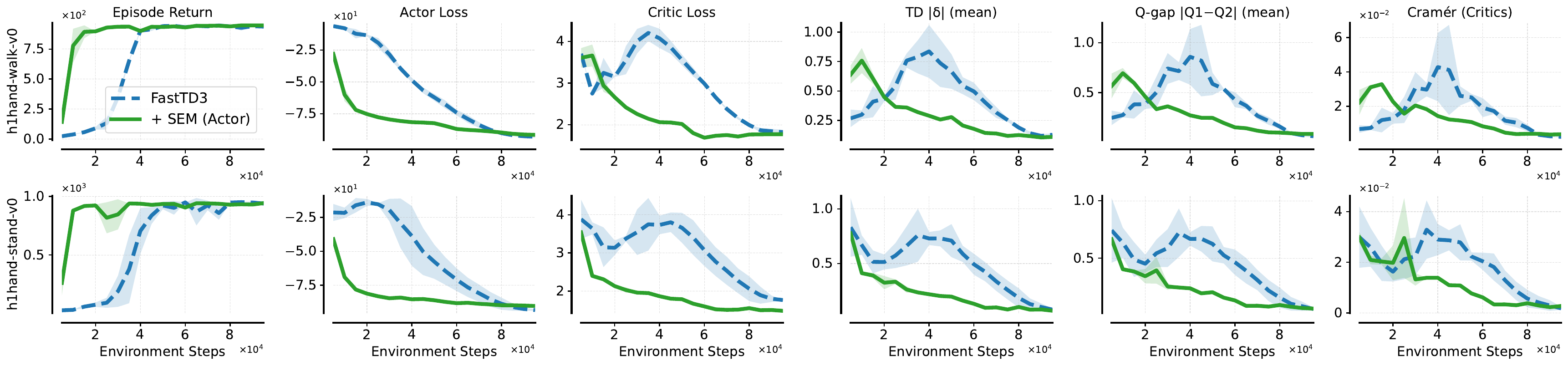}%
    \vspace{-0.4cm}
    \caption{\textbf{Learning dynamics on $2$ HumanoidBench tasks.} SEM reaches high return faster, with lower losses, smaller TD error, reduced critic disagreement, and better-calibrated value estimates.}
    \label{fig:fasttd3_metrics}
    \vspace{-0.25cm}
\end{figure*}

In \autoref{fig:fasttd3_metrics_featues_sem}, we focus our lens on the SEM module itself and examine how it shapes representations and action behavior. As training proceeds, the SEM layer’s activations become markedly sparser (higher Gini \citep{hurley2009comparing,zonoobi2011gini}) and more sharply peaked (lower simplex entropy), while the overall action variance from the policy also declines. This trend is consistent with SEM’s design, where the block-wise softmax promotes competition and selective activation. As a result, the module imposes structured, energy-preserving constraints on its layer, encouraging more decisive feature usage and reducing noise in the downstream policy mapping. Interestingly, this pattern also resonates with prior work in deep RL and representation learning. \citet{hernandez2019learning} show that enforcing sparsity in representations can improve robustness and mitigate interference in Q-learning settings. Moreover, recent studies on sparse architectures in deep RL such find that appropriately structured sparsity can enhance training stability and efficiency \citep{graesser2022state,ceron2024mixtures,ceron2024value,ma2025network}.

\begin{figure*}[!t]
    \centering
    \includegraphics[width=\textwidth]    {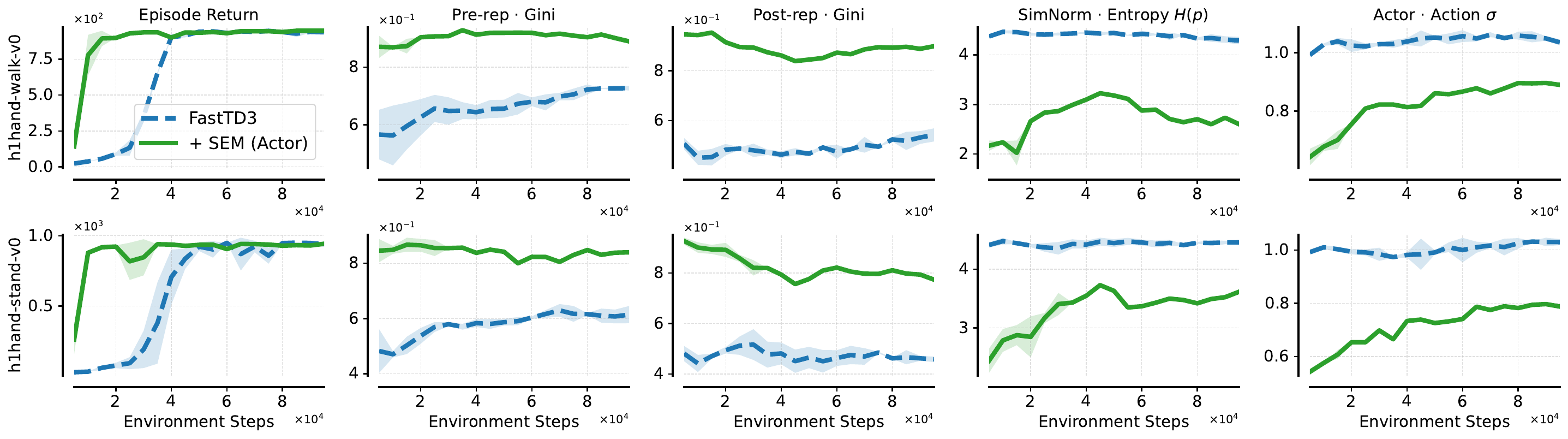}%
    \vspace{-0.4cm}
    \caption{\textbf{Sparsity, entropy, and action std on $2$ HumanoidBench tasks.} SEM agents achieve higher returns with sparser features, lower entropy, and more stable action scales.}
    \label{fig:fasttd3_metrics_featues_sem}
    \vspace{-0.18cm}
\end{figure*}

\begin{figure*}[!t]
    \centering
    \includegraphics[width=0.32\textwidth]    {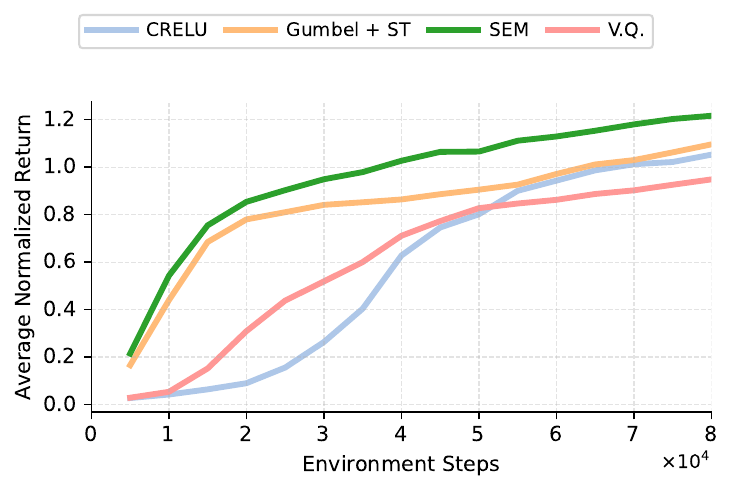}%
    \includegraphics[width=0.342\textwidth]    {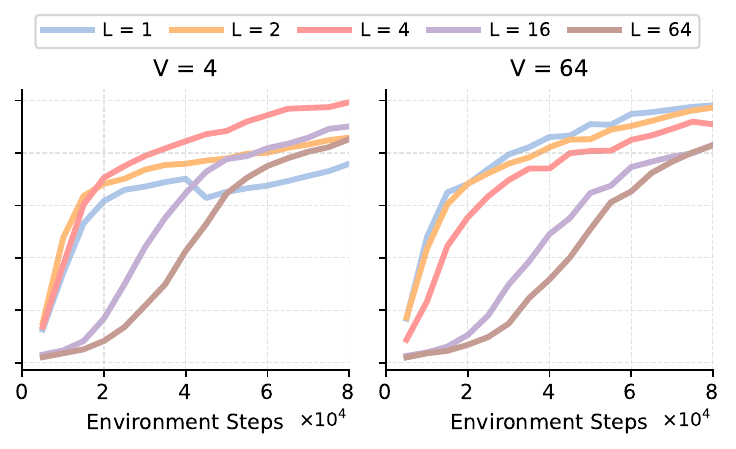}%
    \includegraphics[width=0.342\textwidth]    {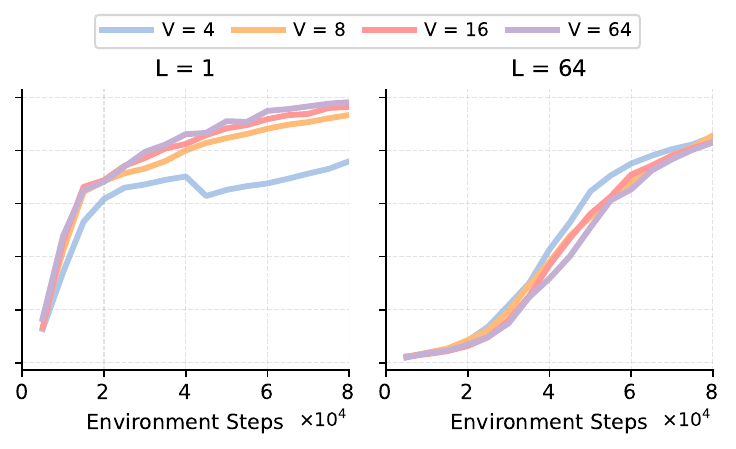}%
    \vspace{-0.3cm}
    \caption{\textbf{Aggregated average return on 5 HumanoidBench tasks.} We constrain the encoder's output of the actor. (left) SEM outperforms alternative methods to impart structure on the encoder's output.
    In SEM, representation capacity scales with $L \times V$ since the embedding consists of $L$ simplex groups of size $V$. When varying $L$ with fixed $V$ (middle panels), performance improves as $L$ increases from low-capacity regimes, and then saturates once $L \times V$ is sufficiently large. 
    When varying $V$ with fixed $L$ (right panels), we observe the same pattern: for small $L$ (e.g., $L{=}1$), increasing $V$ noticeably improves performance, whereas for large $L$ (e.g., $L{=}64$), all values of $V$ perform similarly since the model already operates in a high-capacity regime.}
    \label{fig:humanoid-baseline-vary-LV}
    \vspace{-0.1cm}
\end{figure*}

\paragraph{Comparing SEM to other Regularization Methods}
To contextualize the benefits of simplicial embeddings, we compare SEM to alternative methods to induce structure on the encoder's output. We compare SEM to commonly used methods for learning discrete explicit representations: Gumbel + straight-through~\citep{jang2017categoricalreparameterizationgumbelsoftmax,maddison2017concretedistributioncontinuousrelaxation} and Vector Quantization~\citep{oord2018neuraldiscreterepresentationlearning}. We also compare SEM to C-RELU~\citep{abbas2023loss} which have been shown to improve the representation's stability. We present the results in~\autoref{fig:humanoid-baseline-vary-LV} (left) and find SEM to be more efficient and to lead to higher return than alternative methods. We conjecture that such improvement over Gumbel + ST and Vector quantization can be attributed to the fact that SEM does not necessitate the use of the straight-through estimator.

\paragraph{Analyzing SEM Parameters in Deep RL}
\label{sub:sem_parametes_deeprl}
\cite{lavoiesimplicial} highlighted the effect of the simplex dimensionality $V$ and number of simplices $L$, which jointly control sparsity and capacity of the representation. 
Investigating these parameters in deep RL is essential to understand how SEM balances representation capacity and stability under non-stationary training, and whether the same tradeoffs observed in self-supervised representation learning extend to RL. We study the effect of varying $V$ and $L$ in~\autoref{fig:humanoid-baseline-vary-LV} (middle and right, respectively). Our results show that performance is driven primarily by the total representational capacity $L \times V$. In low-capacity regimes (e.g., $L=1$), increasing $V$ provides clear gains, consistent with the need for additional expressive power. However, once capacity is sufficiently large (e.g., $L=64$), the effect of $V$ largely saturates, different choices of $V$ yield nearly identical performance; and in some cases smaller $V$ (e.g., $V=4$) performs slightly better. Similarly, increasing $L$ improves performance when $L \times V$ is small, but the gains taper off once the model enters a high-capacity regime.

\paragraph{FastTD3 Design Choices and Simplicial Embeddings.}
FastTD3 extends TD3 with several design choices that improve throughput and stability, including parallel simulation, large-batch training, and distributional critics~\citep{seo2025fasttd3}. These modifications enable actor–critic learning to scale efficiently in wall-clock time, but they do not address the geometry of the learned representations. In this section, we analyze how SEM complements FastTD3 by regularizing representation space and evaluate its effectiveness across the algorithmic design choices. In \autoref{fig:sem_hyperparams_a}, we observe that SEM outperforms the baseline even when the agent is trained with reduced data availability (fewer environments, smaller replay buffers, or smaller batch sizes). Comparable gains also appear when algorithmic design choices such as CDQ and C51 are removed. These results demonstrate the robustness of SEM across both data-limited and simplified agent settings.

\begin{figure*}[!t]
    \centering
    \includegraphics[width=0.32\textwidth]{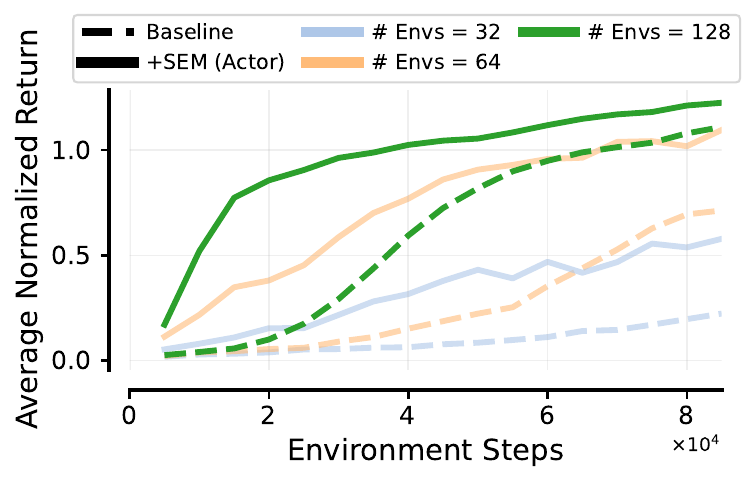}%
    \includegraphics[width=0.32\textwidth]{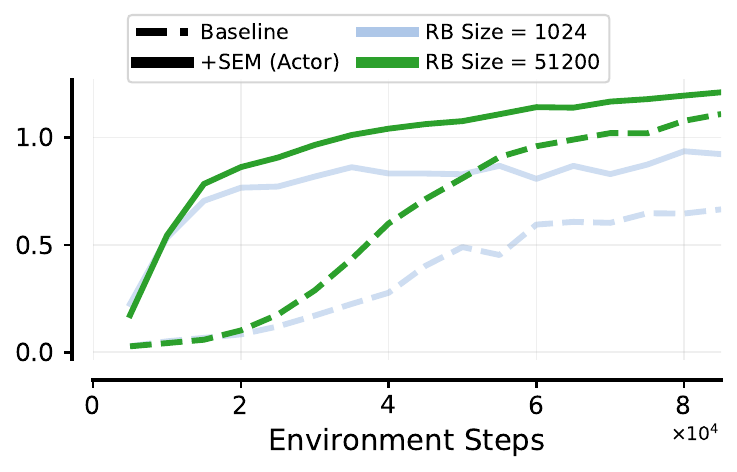}%
    \includegraphics[width=0.32\textwidth]{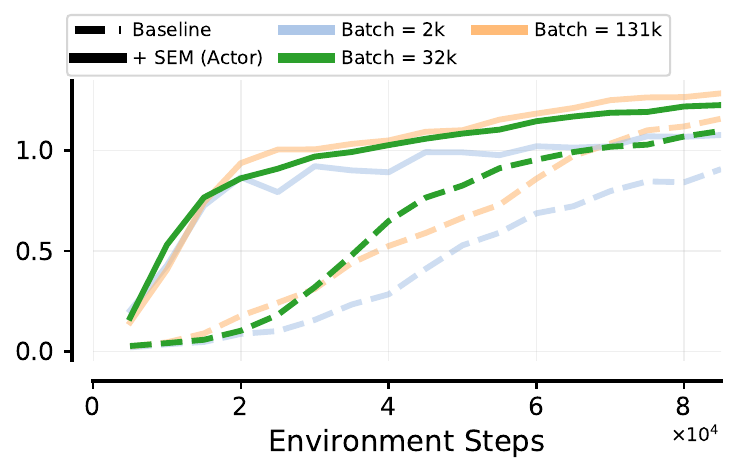}
    \includegraphics[width=0.32\textwidth]{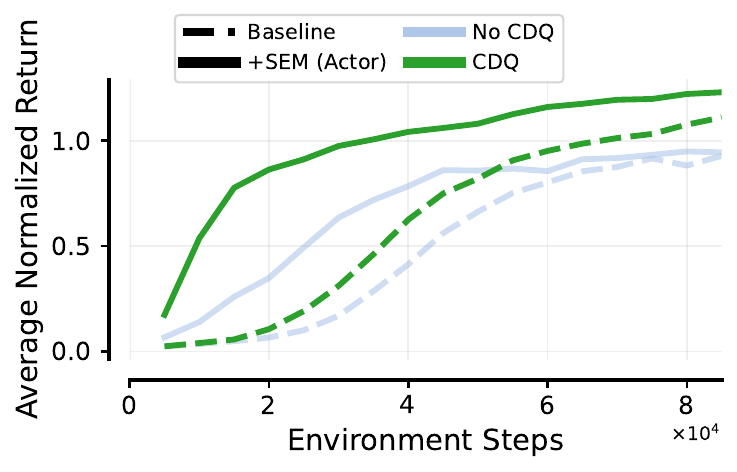}%
    \includegraphics[width=0.32\textwidth]{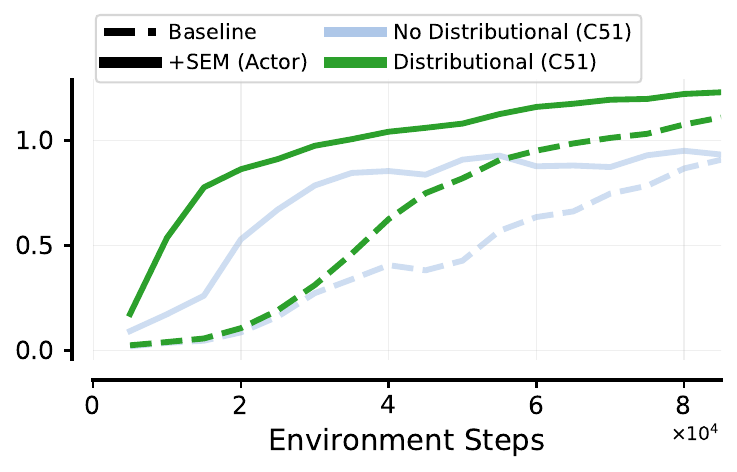}%
    \includegraphics[width=0.32\textwidth]{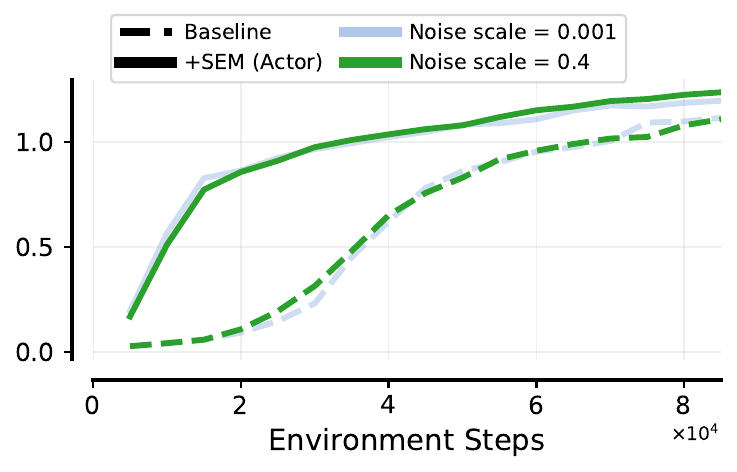}%
    \vspace{-0.4cm}
    \caption{\textbf{Effect of core design choices on FastTD3 with and without SEM on $5$ HumanoidBench tasks.}  SEM \hlMedium{solid green, (\textbf{green}, ---)} consistently improves sample efficiency and asymptotic return across all settings, showing robustness to both hyperparameter and architectural design choices.}
    \label{fig:sem_hyperparams_a}
    \vspace{-0.18cm}
\end{figure*}

\section{Empirical Evaluation}
\label{sec:results}

We further evaluate the effectiveness and generality of SEM across a diverse set of deep RL algorithms and environments.  Our study spans both off-policy and on-policy methods, including FastTD3, FastTD3-SimBaV2, FastSAC \citep{seo2025fasttd3}, and PPO \citep{schulman2017proximal}. Experiments are conducted on challenging humanoid benchmarks (\texttt{28-h1hand tasks}), \citep{sferrazza2024humanoidbench}, Isaac Lab \citep{mittal2023orbit}, Isaac Gym suite \citep{makoviychuk2isaac}, MTBench \citep{joshi2025benchmarking}, and an extended Atari-10 setup comprising 28 ALE games (see \ref{appx:benchmark_atari} for more details) \citep{bellemare2013arcade,aitchison2023atari,fedus_replay}, covering both continuous-control and pixel-based settings. Following prior work \citep{seo2025fasttd3,castanyer2025stable}, we evaluate continuous-control tasks with six seeds and Atari results with three seeds, and aggregate performance across environments is reported, with shaded areas representing 95\% confidence intervals. Full environment details and hyperparameter configurations are provided in \autoref{appx:hyper}.

\paragraph{Fast Actor–Critic Algorithms.}
We first evaluate SEM on the HumanoidBench benchmark using three recent fast actor–critic baselines: FastTD3, FastTD3–SimBaV2, and FastSAC \citep{seo2025fasttd3}. These algorithms represent compute–efficient variants of TD3 and SAC, designed to scale with parallel simulation while maintaining strong performance on high–dimensional humanoid control. FastTD3--SimBaV2 incorporates hyperspherical normalization and reward scaling to accelerate critic training and stabilize optimization \citep{leesimba}; and FastSAC adapts the entropy--regularized SAC framework with similar throughput--oriented design choices, achieving high parallel efficiency while preserving training stability. Across all three baselines, integrating SEM into the actor consistently accelerates early learning and improves asymptotic return. As shown in \autoref{fig:aggregate_performance}, SEM agents not only converge faster than their respective baselines, but also maintain lower variance across seeds. These results demonstrate that SEM provides complementary benefits to fast actor–critic methods, enhancing both stability and sample efficiency without modifying their underlying optimization procedures (see \autoref{appx:learning_curves} for per-task learning curves). 
\autoref{fig:aggregate_performance2} (\emph{right}) further shows that SEM improves policy learning on the  \texttt{Booster T1} humanoid robot~\citep{seo2025fasttd3}, a real-robot benchmark used to validate transfer from large-scale MuJoCo training \citep{zakka2025mujoco} (see \ref{appx:benchmark_mtbench} for more details). We also evaluate FastTD3 on \texttt{12-h1}, \texttt{12-g1} tasks and \texttt{9-Isaac Gym} tasks, where a similar pattern is observed, as shown in \autoref{appx:learning_curves_extra}.

\begin{figure*}[t]
    \centering
    \includegraphics[width=0.32\textwidth]{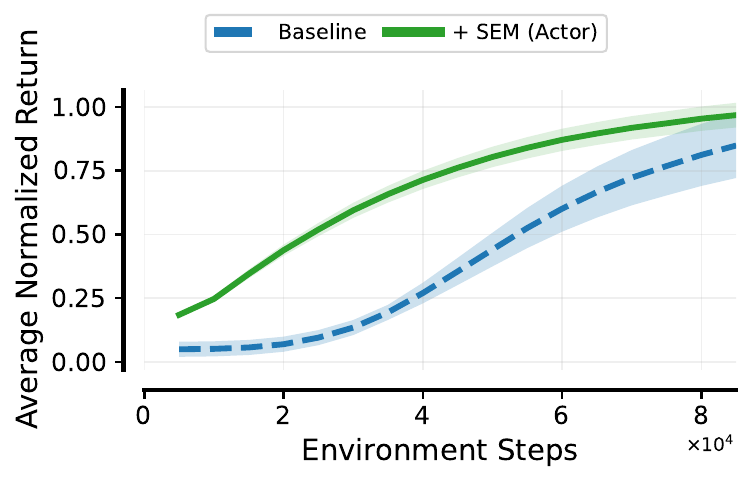}%
    \includegraphics[width=0.32\textwidth]{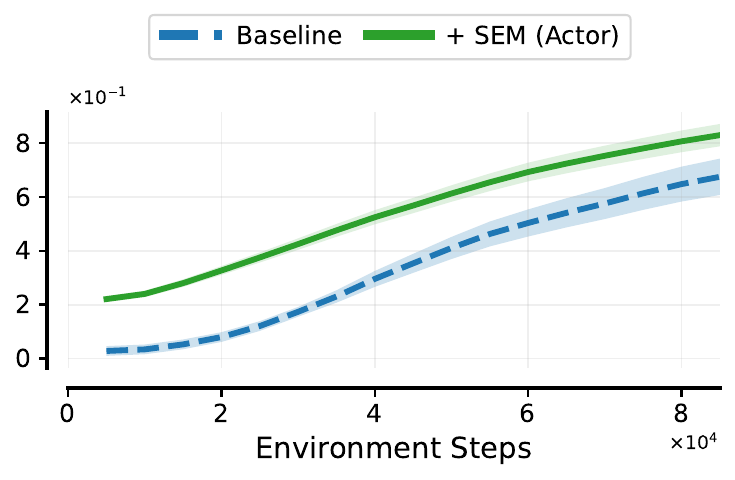}%
    \includegraphics[width=0.32\textwidth]{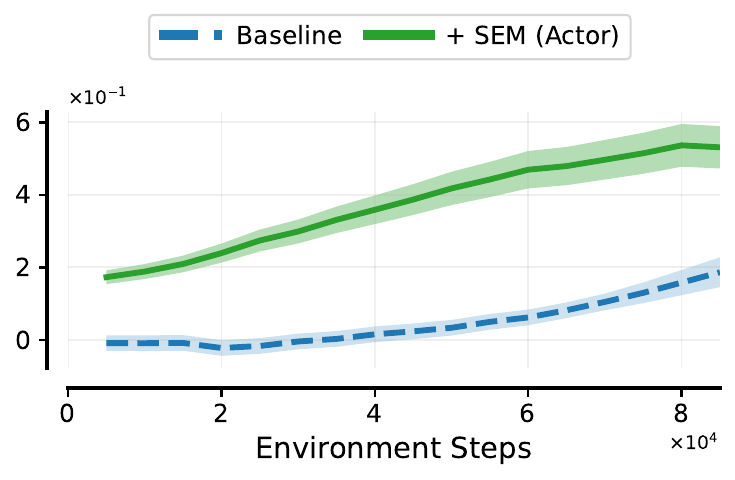}%
    \vspace{-0.4cm}
    \caption{\textbf{SEM on fast actor–critic algorithms.} Average normalized return on HumanoidBench with FastTD3 (left), FastTD3–SimBa (middle), and FastSAC (right). SEM, \hlMedium{solid green, (\textbf{green}, ---)}, consistently improves sample efficiency and yields higher final performance across all algorithms.}
    \label{fig:aggregate_performance}
    \vspace{-0.28cm}
\end{figure*}

\begin{figure*}[!t]
    \centering
    \includegraphics[width=0.31\textwidth]{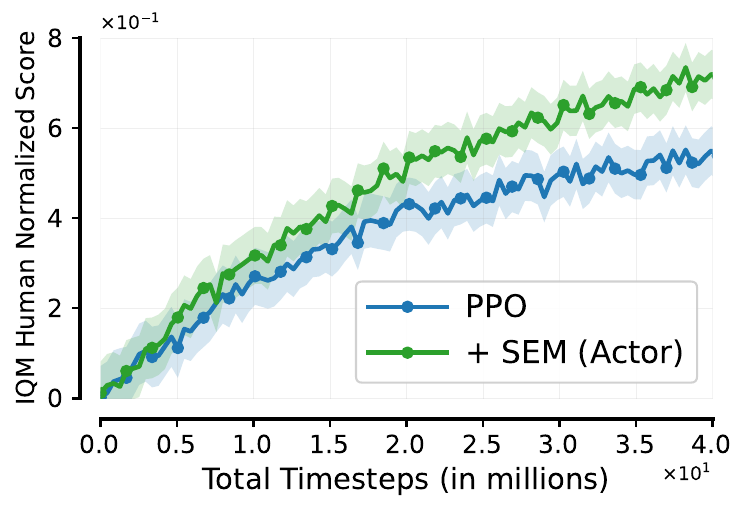}%
    \includegraphics[width=0.31\textwidth]{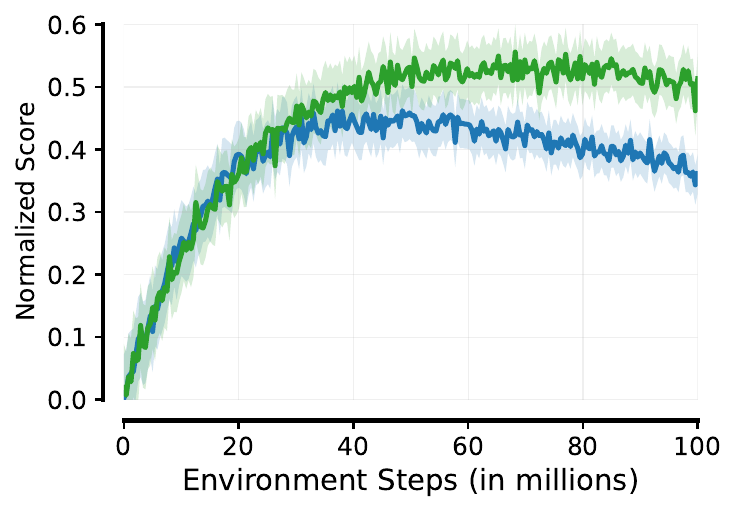}%
    \includegraphics[width=0.32\textwidth]{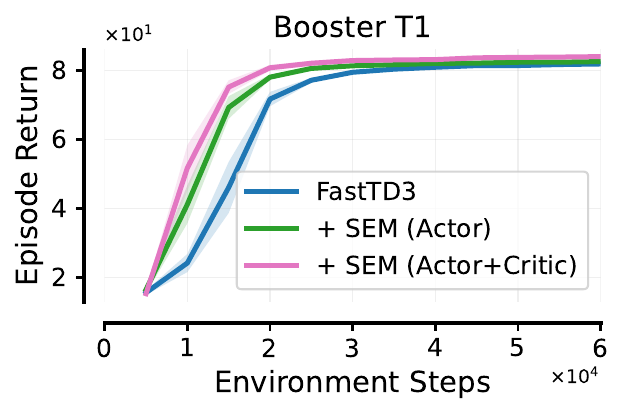}%
    \vspace{-0.3cm}
    \caption{\textbf{Performance of PPO with and without SEM across tasks.} Atari experiments on an extended Atari-10 setup (28 ALE games; pixel-based) (left). PPO in Isaac Gym (middle). Booster T1 (humanoid robot with fixed arms) comparing FastTD3. Applied SEM accelerates learning (right).}
    \label{fig:aggregate_performance2}
    \vspace{-0.2cm}
\end{figure*}

\paragraph{Proximal Policy Optimization Algorithm.}
To evaluate the generality of SEM beyond off-policy methods, we integrate it into PPO~\citep{schulman2017proximal}, a popular on-policy method, using the CleanRL implementation~\citep{huang2022cleanrl}. We evaluate SEM on two distinct benchmarks, Isaac Gym for continuous control and the ALE \citep{bellemare2013arcade} for pixel-based discrete control in Atari games. In both domains, SEM improves PPO by accelerating convergence and increasing final returns. The per-environment learning curves are shown in \autoref{fig:atari_env_learnigcurves}. Aggregate results are summarized in \autoref{fig:aggregate_performance2}, with the left panel showing performance gains on the ALE suite\footnote{For the SEM module, we use $L=128$ and $V=4$ when evaluating ALE games \citep{bellemare2013arcade} with PPO, whose penultimate fully connected layer has dimensionality 512.}, and the middle panel showing improvements on the Isaac Gym tasks. These results demonstrate that SEM’s benefits are not limited to TD3-style critics but extend to policy-gradient methods and vision-based RL, underscoring its broad applicability.

\paragraph{Multitask and Offline-to-online Deep RL.} Recent work by \citet{joshi2025benchmarking} introduced a large-scale benchmark for multi-task reinforcement learning (MTRL) in robotics. Implemented in Isaac Gym, this benchmark comprises over seventy robotic control problems spanning both manipulation and locomotion, with subsets such as MT50 focused on manipulation. We compare FastTD3~\citep{seo2025fasttd3} to its SEM-augmented variants (+SEM). As shown in \autoref{fig:aggregate_performance2} (\emph{right}), +SEM improves sample efficiency, achieving faster learning and higher returns within the same training budget. We also evaluate Flow Q-Learning (FQL; \citealp{park2025flow}), an actor–critic–style method that couples a value function with a flow-based policy generator trained to transport actions toward high-value regions. Despite differing from standard policy-gradient updates, FQL preserves the critic-guided policy improvement structure central to actor–critic algorithms. Applying SEM to the flow-based actor improves sample efficiency and stability during the offline-to-online transition (see \autoref{fig:offline2online}), where representations must adapt from static replay data to evolving on-policy distributions. We view these observations as opening opportunities to investigate whether simplex-constrained embeddings can help mitigate representation drift and facilitate adaptation; establishing this more broadly will require further study and is left to future work.

\begin{figure*}[!t]
    \centering
    \includegraphics[width=0.225\textwidth]{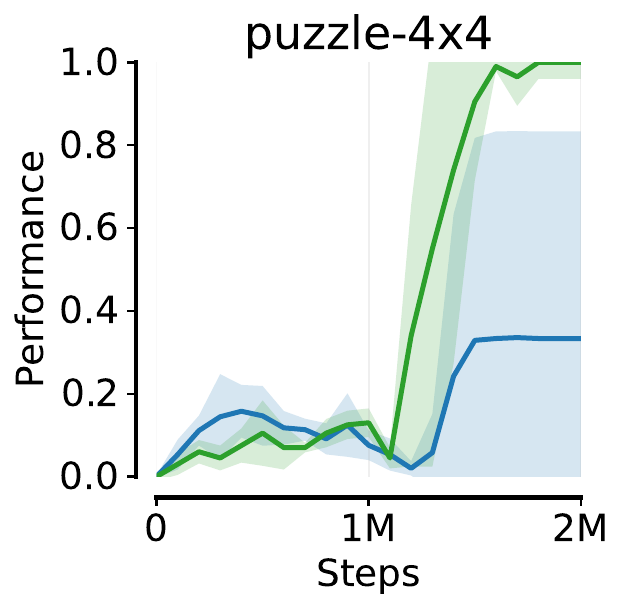}%
    \includegraphics[width=0.225\textwidth]{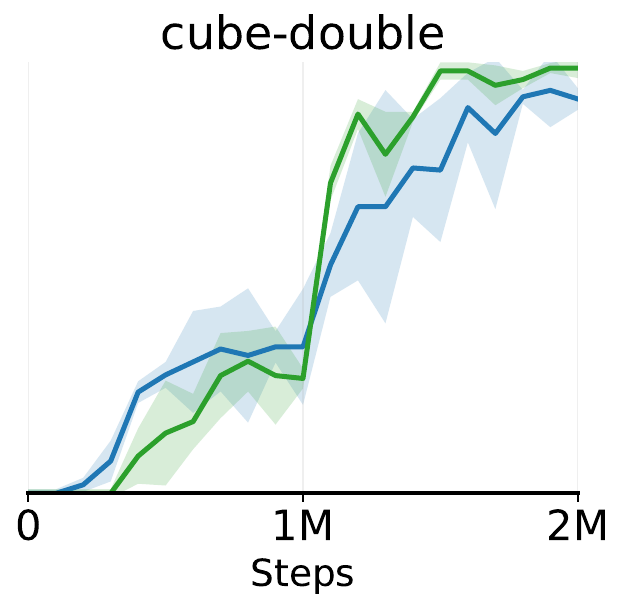}%
    \includegraphics[width=0.225\textwidth]{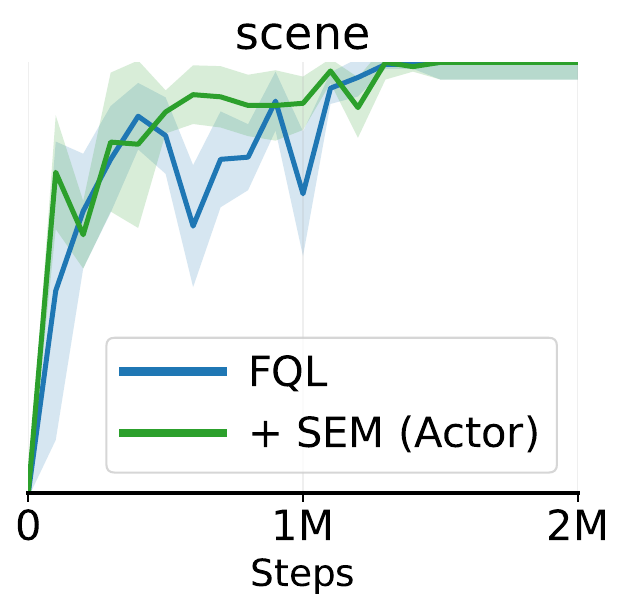}%
    \vline height 90pt depth 0 pt width 1 pt
    \includegraphics[width=0.32\textwidth]{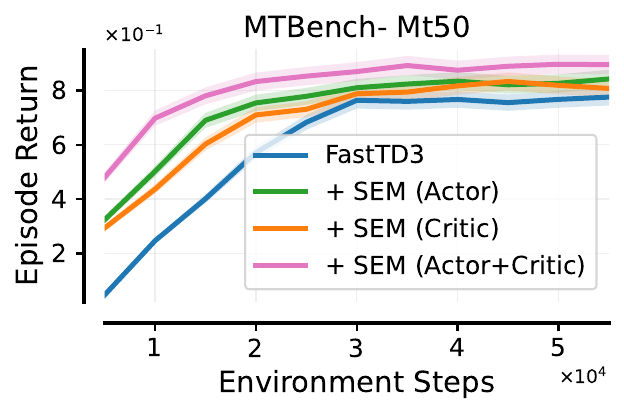}%
    \vspace{-0.35cm}
    \caption{\textbf{Left:} Offline-to-online RL results on $3$ OGBench tasks ($5$ seeds) \citep{park2025ogbench}. Online fine-tuning starts at 1M steps. \textbf{Right:} MTBench MT50 (robotics tasks) comparing FastTD3.}
    \label{fig:offline2online}
    \vspace{-0.3cm}
\end{figure*}

\paragraph{Value-Based Deep RL.} Although our evaluation centers on actor–critic methods, this choice reflects experimental scope rather than a conceptual limitation. SEM operates on learned representations and is compatible with value-based algorithms. Extending SEM to these settings, such as DQN-style architectures \citep{mnih2013playing}, is an important avenue for improving sample efficiency. To explore this direction, we evaluated SEM within PQN \citep{gallicisimplifying}, a recently proposed value-based algorithm that simplifies deep temporal-difference learning by streamlining target computation and reducing unnecessary complexity. Our preliminary PQN experiments indicate that, while SEM yields improvements in a few games (see \autoref{fig:atari_pqn_env_learnigcurves}), it does not provide consistent gains in this regime overall (see \autoref{fig:attari_aggregate_pqn2}). This suggests that a more systematic investigation is needed to determine when geometric constraints benefit value-based methods. See \autoref{appx:sem_validation} for more details.

\section{Discussion}
\label{sec:dicussion}

Our results demonstrate that geometric priors on representation space can substantially improve the efficiency of deep RL agents. By constraining features to a product of simplices, SEM yields bounded and sparse embeddings that avoid feature collapse and neuron dormancy under non-stationarity. This lightweight inductive bias requires no auxiliary losses, adds effectively zero computational cost (see~\autoref{tbl:wallClockTime}), and consistently improves sample efficiency and asymptotic return across actor–critic methods and diverse benchmarks. Unlike existing model-based RL approaches using discrete state embeddings~\citep{hansen2023td, hafner2020mastering, hafner2023mastering, scannell2025discrete}, SEM does not require auxiliary objectives or additional networks. Surprisingly, 
its benefits are most pronounced when applied to the actor’s penultimate layer, where feature geometry most directly shapes policy gradients. Our analyses indicate that SEM alleviates several optimization difficulties in deep RL \citep{moalla2024no,juliani2024a}. By preserving effective rank, bounding feature norms, and reducing critic disagreement, SEM provides more reliable gradients and stabilizes the bootstrapped targets that often undermine critic training. These effects highlight representation geometry as a simple but powerful lever for stabilizing learning under non-stationarity.

\paragraph{Limitations and Future Work.}
SEM is not a universal remedy. In tasks with extreme distribution shift or very sparse rewards, feature collapse and critic drift may still occur, and SEM introduces hyperparameters ($L$, $V$, $\tau$) that require light tuning to balance sparsity and capacity. For consistency and computational efficiency, we adopted the baseline models’ default hyperparameters across all experiments. Nonetheless, RL agents are notably sensitive to these choices~\citep{ceron2024on,JMLR:v25:23-0183}, and ideally each experimental setting would undergo a dedicated hyperparameter search, though this is often computationally prohibitive. Moreover, our experiments focus on continuous control and Atari; its impact on large-scale vision or language-conditioned RL remains untested. Future work should investigate adaptive schedules for $(L,V,\tau)$, and integration in more general-purpose algorithms such as MR.Q~\citep{fujimototowards}, which combine multiple objectives and scale across domains. Another direction is to examine whether SEM benefits value-based algorithms, and to explore both its potential for scaling network architectures \citep{ceron2024value,sokar2025dont} and its interaction with architectural priors (e.g., MoEs, Residual Nets) \citep{ceron2024mixtures,castanyer2025stable,kooi2025hadamaxencodingelevatingperformance}.


\clearpage
\section*{Acknowledgments}
The authors would like to thank Ali Saheb Pasand and Lu Li for valuable discussions during the preparation of this work. We thank Younggyo Seo for his kindness in answering questions about the FastTD3 repository. Gopeshh Subbaraj deserves a special mention for providing valuable feedback on an early draft of the paper. 

The research was enabled in part by computational resources provided by the Digital Research
Alliance of Canada (\url{https://alliancecan.ca}) and Mila (\url{https://mila.quebec}). Pablo Samuel Castro acknowledges funding from NSERC Discovery Grant. We want to acknowledge funding support from Google and CIFAR AI. We would also like to thank the Python community \citep{van1995python, 4160250} for developing tools that enabled this work, including NumPy \citep{harris2020array}, Matplotlib \citep{hunter2007matplotlib}, Jupyter \citep{2016ppap}, and Pandas \citep{McKinney2013Python}.

\section*{Ethics statement}
This paper presents work whose goal is to advance the field of Machine Learning, and reinforcement learning in particular. There are many potential societal consequences of our work, none which we feel must be specifically highlighted here.

\section*{Reproducibility statement}
We provide all the details to reproduce our results in the Appendix.

\section*{LLM use}
\label{appx:llm}

LLMs were used to assist paper editing and to write the code for plotting experiments.

\bibliography{iclr2026_conference}
\bibliographystyle{iclr2026_conference}

\newpage
\appendix

\clearpage
\vspace*{2cm}
\appendix
\addcontentsline{toc}{section}{Appendices}

\section*{\LARGE \bfseries Appendix Contents}
\label{sec:appendix_toc}

\par\vspace{0.5em}
\noindent\rule{\textwidth}{0.6pt}
\vspace{0.3em}

\startcontents[appendix]
\printcontents[appendix]{l}{1}{\setcounter{tocdepth}{2}}

\par\vspace{0.3em}
\noindent\rule{\textwidth}{0.6pt}
\vspace{0.5em}

\clearpage

\section{Related Work}
\label{sec:relatedwork}

\paragraph{Stability in Deep RL}
A longstanding challenge in reinforcement learning is the stability of value-based updates in actor--critic methods. 
One major source of instability is overestimation bias, which accumulates when bootstrapped critics reinforce overly optimistic targets. Twin Delayed DDPG (TD3)~\citep{fujimoto2018addressing} mitigates this issue with clipped double Q-learning and delayed policy updates, producing more reliable critics and improving control performance. 
Another direction seeks to stabilize targets by modeling full return distributions rather than point estimates. 
Distributional RL methods such as C51~\citep{bellemare2017distributional}, QR-DQN~\citep{dabney2018distributional}, and IQN~\citep{dabney2018implicit} show that capturing the shape of the return distribution reduces variance and provides richer learning signals. 

More recently, architectural choices have been used to enhance critic stability. SimBaV2~\citep{leesimba} biases networks toward simpler, well-conditioned representations via input normalization, linear residual paths, and feature normalization, helping large models avoid divergence. Meanwhile, BRO shows that scaling critic capacity paired with strong regularization and optimistic updates yields dramatically better sample efficiency in continuous control tasks~\citep{nauman2024bigger}. In parallel, \citet{ceron2024value,ceron2024mixtures,ma2025network} demonstrate that applying static network sparsity unlocks scaling of deep RL models by mitigating gradient interference \citep{bengio2020interference,obando2023small} and plasticity loss \citep{lyle2023understanding}. Training regimens also play a role; SR-SPR~\citep{d2022sample} demonstrates that periodic network resets counteract bootstrapping drift, allowing agents to sustain extremely high replay ratios without collapse. 
FastTD3~\citep{seo2025fasttd3} integrates several of these lessons, combining parallel simulation, large-batch updates, and distributional critics to achieve strong stability at high throughput. 
Our approach is complementary to these efforts. Rather than modifying update schedules or ensemble targets, we constrain the geometry of latent representations, aiming to reduce critic variance and stabilize bootstrapped updates through structured embeddings.
 
\paragraph{Sample-Efficient RL}
Beyond stability, a parallel line of work targets sample efficiency, with progress spanning both representation-driven methods and algorithmic or model-based improvements. 
Representation learning has emerged as a powerful way to extract more information per interaction. 
CURL~\citep{laskin2020curl} applies contrastive learning to enforce invariances in encoders trained jointly with the control objective, significantly narrowing the gap between pixel- and state-based agents. 
SPR~\citep{schwarzerdata} extends this idea with self-predictive latent dynamics, ensuring temporal consistency and yielding state-of-the-art data efficiency on Atari. 
Building on SPR, SR-SPR~\citep{d2022sample} adds scheduled resets that prevent drift and enable aggressive replay-ratio scaling. 
Other works inject architectural or learning biases for example; SimBa~\citep{lee2025simba} introduces simplicity constraints that regularize feature representations, allowing larger networks to remain well-conditioned under nonstationary training; 
its successor, SimBaV2~\citep{leesimba}, further stabilizes scaling through hyperspherical normalization, constraining weights and activations to unit-norm manifolds and ensuring consistent gradient magnitudes across capacities. 
Neuroplastic Expansion~\citep{liu2025neuroplastic} complements these structural interventions by dynamically growing and pruning neurons to preserve long-term adaptability, 
while The Courage to Stop~\citep{liu2025the} improves behavioral efficiency by terminating unproductive trajectories early, reducing replay-buffer contamination. 

For SALE~\citep{fujimoto2023sale} further enriches the representation space with state–action embeddings, producing TD7, which substantially outperforms TD3 in continuous control. Outside of RL, simplicial embeddings~\citep{lavoiesimplicial} show that constraining features to products of probability simplices induces sparse, group-structured representations that generalize effectively in supervised and self-supervised settings and leads to a compositional representation~\citep{ren2023improvingcompositionalgeneralizationusing,lavoie2025compositionaldiscretelatentcode}. We draw inspiration from this idea and adapt it to reinforcement learning, inserting simplicial modules into fast actor--critic pipelines.

Algorithmic and model-based approaches provide another path to efficiency. 
Soft Actor-Critic (SAC)~\citep{haarnoja2018soft} introduces maximum-entropy RL, balancing reward and exploration to achieve robust and data-efficient learning in continuous control. 
Model-based algorithms further improve efficiency by planning with learned dynamics. 
TD-MPC2~\citep{hansen2023td} demonstrates that latent-space model predictive control scales effectively across diverse domains, achieving state-of-the-art performance with a single set of hyperparameters. 
EfficientZero~\citep{ye2021mastering} combines MuZero-style search with learned latent dynamics, reaching human-level Atari performance with orders of magnitude fewer environment steps. 
Our method differs from these approaches by focusing on representation geometry: rather than auxiliary losses, ensembles, or world models, we show that a single simplicial bottleneck can consistently improve the sample efficiency of fast actor--critic algorithms while preserving their hallmark wall-clock advantages.

\paragraph{Structured representation in RL} Constraining the encoder's output is common in RL. C-ReLU has been shown to improve training and plasticity~\citep{abbas2023loss}. Feature normalization with L2 regularization of the features also improves training scalability and enables larger scale training of RL models. Closer to our work, DreamerV2~\citep{hafner2020mastering} and DreamerV3~\citep{hafner2023mastering} encode the observation into a one-hot discrete representation work.~\citet{scannell2025discrete} also learn discrete latent space via a learned codebook and gumbel softmax with straight-through estimator. ~\citet{wabartha2024piecewise} also propose to learn discrete encoding of the state for policy learning and show interpretable representations. However, methods with explicit discretization necessitate the use of a biased gradient estimator to propagate the learning signal inside the encoder. Similar to our work,~\citet{hansen2023td} constrain the encoder's output into SEM. In this work, we find that SEM is a crucial component for improving sample efficiency and performance in RL and study that component in details and connect the improved performance to the improved training stability coming from the sparse and structured representation endowed by SEM.

\section{Formal Analysis}
\label{appex:formal_analysis}
\begin{theorem}
    Non-stationarity increases neuron dormancy.
\end{theorem}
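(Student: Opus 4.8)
The plan is to give the informal statement a precise form—\emph{under a drifting target distribution the expected fraction of dormant ReLU units is strictly larger than under a fixed target, and grows with the number of target changes}—and to prove it by combining an \emph{absorbing-state} structural fact about dead ReLUs with an \emph{anti-concentration} estimate on how much each target change perturbs a unit's pre-activation. First I would fix notation: for a hidden unit $i$ with pre-activation $a_i(x)=w_i^\top f(x)+b_i$ and activation $h_i=\mathrm{ReLU}(a_i)$, call unit $i$ \emph{dormant} at time $t$ if $\mathbb{P}_{x}[a_i(x)>0]\le\delta$ (the $\delta=0$ case recovering the strict dead-unit notion used in the dormant-neuron literature), and let $D_t=\sum_i \mathbf{1}\{\text{unit }i\text{ dormant at }t\}$ count dormant units.

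The first key step is a structural lemma: \emph{the set of dead ReLU units is absorbing under the gradient dynamics}. If $a_i(x)\le 0$ for (almost) every $x$ in the support of $\mathcal{D}_t$, then $\partial h_i/\partial a_i=0$ on that support, so $\nabla_{w_i}\mathcal{L}_t=\nabla_{b_i}\mathcal{L}_t=0$ and the incoming parameters of unit $i$ are frozen; the unit then stays dead for all subsequent $t$ regardless of how the target moves. Consequently $D_t$ is, in expectation, non-decreasing once units begin to die, and it suffices to lower-bound the \emph{rate} at which new units enter this trap.

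The second step is to quantify that rate in terms of non-stationarity, using the bias term $b_t=\nabla\mathcal{L}_{t+1}(\theta_t^\star)$ already introduced in the main text. In the stationary regime the target is fixed, $\theta_t^\star$ is stable, and $b_t=0$: the only parameter motion is shrinking stochastic noise, so the per-step probability that an active unit is pushed below zero across the whole support decays, and $\mathbb{E}[D_t]$ saturates at a small level. In the non-stationary regime each target change contributes a nonzero displacement $\Delta\theta\approx-\eta\,b_t$; modeling the induced pre-activation shock $\Delta a_i$ as a mean-shifted random variable with variance bounded below by $\sigma^2>0$ (an anti-concentration assumption I would justify from a Gaussian/CLT model of the gradient shock), I would show that any unit with finite activation margin $m_i:=\operatorname{ess\,sup}_x a_i(x)$ has death probability at a shock at least $\mathbb{P}[\Delta a_i<-m_i]>0$. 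Summing these bounds over units by linearity of expectation—crucially \emph{not} requiring independence across units—gives a strictly positive expected number of new deaths per target change, hence $\mathbb{E}[D_t^{\mathrm{nonstat}}]>\mathbb{E}[D_t^{\mathrm{stat}}]$ for every $t$ past the first change, with the gap increasing in the number of changes $\lfloor t/T\rfloor$.

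The main obstacle I anticipate is this anti-concentration step: turning ``the target distribution changed'' into a \emph{quantitative} lower bound on the per-unit death probability requires controlling how a change in $\mathcal{L}_t$ propagates through the shared backbone $f$ to individual pre-activations, and the units are coupled through that backbone and the output layer. I would sidestep the coupling by working only with expectations of dormancy indicators, so cross-unit correlations never enter, and I would obtain the variance lower bound either from a local-linearization model of the gradient noise or from an explicit assumption that the target shift has a nondegenerate component aligned with unit $i$'s input subspace. The honest version of the theorem carries these regularity hypotheses, and the informal one-line statement should be read as their consequence.
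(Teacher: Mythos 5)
Your proposal takes a genuinely different route from the paper's. The paper argues through representation statistics: a first-order SGD expansion around the moving minimizer yields a persistent bias term $b_t=\nabla\mathcal{L}_{t+1}(\theta_t^\star)$, which is claimed to induce oscillation and deflation of the feature covariance $\Sigma_t$; the bound $\mathbb{E}[\|\nabla_W\mathcal{L}_t\|_F^2]\le 4\,\mathbb{E}[\delta_t^2]\operatorname{tr}(\Sigma_t)$ then shows that shrinking feature energy starves the gradients, and the ReLU gating inequality $\mathbb{E}\bigl[\|\partial\mathcal{L}_t/\partial a_j\|^2\bigr]\le p_{j,t}\,\mathbb{E}\bigl[\|\nabla_z\mathcal{L}_t\|^2\bigr]$ closes the loop to dormancy. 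You instead argue through the parameter dynamics of individual units: dead ReLUs form an absorbing trap, and each target change delivers a pre-activation shock with nonzero mean proportional to $b_t$ whose anti-concentration gives every finite-margin unit a positive death probability, so dormancy accumulates with the number of target changes. Your version attempts to supply exactly the causal link the paper leaves as an assertion (that drift actually \emph{reduces} $p_{j,t}$, rather than merely noting that low $p_{j,t}$ gates gradients), and it makes the required regularity hypotheses explicit; the paper's version is shorter and ties dormancy to the covariance and effective-rank diagnostics measured in the experiments, but its steps from $b_t\neq 0$ to covariance deflation and from there to lower $p_{j,t}$ are asserted rather than derived, so neither argument is fully rigorous.

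One concrete gap in your argument: the absorbing-state lemma holds only for first-layer units or a frozen backbone. For a unit $i$ in a deeper layer, $a_i(x)=w_i^\top f(x)+b_i$ with $f$ shared and still training through the surviving units, so even though $\nabla_{w_i}\mathcal{L}_t=0$ once the unit is dead, the distribution of $f(x)$ keeps moving and can push $a_i$ back above zero. Since your strategy reduces the theorem to a lower bound on the death \emph{rate} by invoking monotonicity of $D_t$, losing the absorbing property breaks the reduction: you would need to restrict to the first layer, add a hypothesis that revival probability is dominated by death probability, or track the net drift of $\mathbb{P}[a_i>0]$ directly (which is closer to what the paper gestures at with $p_{j,t}$). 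You flag the backbone coupling as an obstacle for the anti-concentration step, but it bites the absorbing lemma first.
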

\begin{proof}
Let $\mathcal{D}_t$ be the data distribution at iteration $t$ and consider a critic $f_\theta(x) = W\,h_\phi(x),$
trained by minimizing the (mean) squared error to targets $y_t(x)$:
\begin{equation}
\mathcal{L}_t(\theta) = \mathbb{E}_{x \sim \mathcal{D}_t} \!\left[ \big(f_\theta(x) - y_t(x)\big)^2 \right].
\label{eq:loss}
\end{equation}
Define the minimizer $\theta_t^\star \in \arg\min_\theta \mathcal{L}_t(\theta)$ and tracking error $e_t = \theta_t - \theta_t^\star$.
A first–order expansion of SGD around $\theta_t^\star$ gives
\begin{equation}
e_{t+1} \;\approx\; (I - \alpha H_t)\,e_t \;-\; \alpha\, b_t,
\quad
H_t = \nabla^2 \mathcal{L}_t(\theta_t^\star), \;
b_t = \nabla \mathcal{L}_{t+1}(\theta_t^\star),
\label{eq:tracking}
\end{equation}
where $b_t=0$ if $\mathcal{D}_{t+1}=\mathcal{D}_t$, but $b_t \neq 0$ under drift. This shows that the optimizer must continually track a moving minimizer, which destabilizes learned features. Let $z = h_\phi(x) \in \mathbb{R}^d$ with covariance
\begin{equation}
\Sigma_t = \mathrm{Cov}_{x \sim \mathcal{D}_t}(z)
= \mathbb{E}[zz^\top] - \mathbb{E}[z]\mathbb{E}[z]^\top,
\qquad
\mathrm{srank}(\Sigma_t) = \frac{\|\Sigma_t\|_F^2}{\|\Sigma_t\|_2^2}.
\label{eq:cov}
\end{equation}
In the stationary case, $\Sigma_t \to \Sigma$ with a large stable rank, preserving feature diversity. Under non-stationarity, the drift term in \eqref{eq:tracking} induces oscillations in $\Sigma_t$
and systematic \emph{covariance deflation} (drop in $\mathrm{srank}$), a hallmark of collapse. When representations collapse (covariance deflation; \eqref{eq:cov}), feature energy shrinks. For a linear head,
\begin{equation}
\mathbb{E}\!\left[\|\nabla_W \mathcal{L}_t\|_F^2\right]
\;\le\; 4\,\mathbb{E}[\delta_t^2]\;\operatorname{tr}(\Sigma_t),
\qquad
\delta_t=f_\theta(x)-y_t(x),
\label{eq:grad-energy}
\end{equation}
so smaller $\operatorname{tr}(\Sigma_t)$ directly yields smaller gradients and slower learning. With ReLU features $z=\sigma(a)$, the backprop signal through unit $j$ is gated: 
\begin{equation}
\frac{\partial \mathcal{L}_t}{\partial a_j}
=\mathbf{1}\{a_j>0\}\,\langle\nabla_{z}\mathcal{L}_t,e_j\rangle
\quad\Rightarrow\quad
\mathbb{E}\!\left[\Big\|\tfrac{\partial \mathcal{L}_t}{\partial a_j}\Big\|^2\right]
\le p_{j,t}\,\mathbb{E}\!\left[\|\nabla_{z}\mathcal{L}_t\|_2^2\right],
\label{eq:gated}
\end{equation}
where $p_{j,t}=\Pr(a_j>0)$ and $e_j$ is the $j$-th basis vector. Non-stationary drift (\eqref{eq:tracking}) reduces $p_{j,t}$ and $\operatorname{Var}(z_j)$; together with lower $\operatorname{tr}(\Sigma_t)$, this shrinks per-unit updates and increases neuron dormancy \citep{sokar2023dormant}.
\end{proof}

\section{Fast Actor–Critic}
\label{appx:preliminaries}

FastTD3~\citep{seo2025fasttd3} extends the standard TD3 framework by combining
\textit{(i)} parallel simulation across many environment instances, 
\textit{(ii)} large-batch critic updates, and 
\textit{(iii)} algorithm design choices like distributional critics (C51) \citep{bellemare2017distributional}, noise scaling and clipped double Q-learning (CDQ) \citep{fujimoto2018addressing}. 

Instead of approximating only the expected return, FastTD3 estimates the entire return distribution \(Z(s,a)\) following C51~\citep{bellemare2017distributional}. The action--value distribution is approximated by a categorical distribution supported on \(N\) fixed atoms \(\{z_i\}_{i=0}^{N-1}\), uniformly spaced in \([v_{\min}, v_{\max}]\):
\[
    z_i \;=\; v_{\min} + i\,\Delta z, 
    \qquad 
    \Delta z \;=\; \frac{v_{\max}-v_{\min}}{N-1},
    \quad i=0,1,\dots,N-1.
\]
The critic outputs logits \(\{\ell_i(s,a)\}_{i=0}^{N-1}\) which define probabilities via
\(p_i(s,a) = \mathrm{softmax}(\ell(s,a))_i\).
Given a transition \((s,a,r,s')\) and a next action \(a'=\pi_{\mathrm{targ}}(s')\), the Bellman-updated support is
\[
z'_j \;=\; \mathrm{clip}\bigl(r + \gamma z_j,\, v_{\min},\, v_{\max}\bigr), 
\qquad j=0,1,\dots,N-1,
\]
with target probabilities \(p_j(s',a')\) from the target critic at \((s',a')\).
C51 projects this target distribution back onto the fixed support via the projection operator \(\Phi\):
\begin{equation}
m_i \;\equiv\; (\Phi T Z)(z_i) \;=\; \sum_{j=0}^{N-1} 
p_j(s',a') \;\Bigl[\, 1 - \tfrac{|z'_j - z_i|}{\Delta z} \,\Bigr]_+,
\qquad [x]_+ \;=\; \max\{x,0\}.
\end{equation}
Training minimizes the cross-entropy between the projected target \(m\) and the predicted distribution:
\[\mathcal{L}(s,a) \;=\; - \sum_{i=0}^{N-1} m_i \,\log p_i(s,a).
\]
This distributional perspective reduces variance in target estimation and captures the multi-modality of returns in continuous control. Two other important stabilizers in actor--critic methods are 
\emph{noise scaling} and \emph{Clipped Double Q-learning (CDQ)}.  
Noise scaling injects Gaussian perturbations into the action to balance exploration and stability:
\begin{equation}
    a = \pi_\theta(s) + \epsilon, 
    \qquad \epsilon \sim \mathcal{N}(0, \sigma^2 I),
\end{equation}
where the scale $\sigma$ must be tuned to avoid either poor exploration ($\sigma$ too small) or instability ($\sigma$ too large). On the other hand, CDQ mitigates overestimation bias by maintaining two critics $Q_{\phi_1}, Q_{\phi_2}$ and defining the bootstrapped target conservatively as
\begin{equation}
    y = r + \gamma \, 
    \min_{i=1,2} Q_{\phi_i^-}\!\big(s', \pi_\theta(s') + \epsilon \big),
    \qquad \epsilon \sim \mathcal{N}(0, \sigma_{\text{target}}^2 I),
\end{equation}
where $\phi_i^-$ are delayed target networks and $\sigma_{\text{target}}$ controls target-smoothing noise.  
Each critic then minimizes its own squared Bellman error:
\begin{equation}
    \mathcal{L}_Q(\phi_i) = 
    \mathbb{E}_{(s,a,r,s') \sim \mathcal{D}}
    \Big[ \big(Q_{\phi_i}(s,a) - y\big)^2 \Big], 
    \quad i=1,2.
\end{equation}

Together, these design choices underpin the high-throughput yet stable behavior of FastTD3 \citep{seo2025fasttd3}.

\section{Benchmarks}
\label{appx:benchmarks}

\begin{figure*}[!h]
    \centering
    \includegraphics[width=0.3\textwidth]{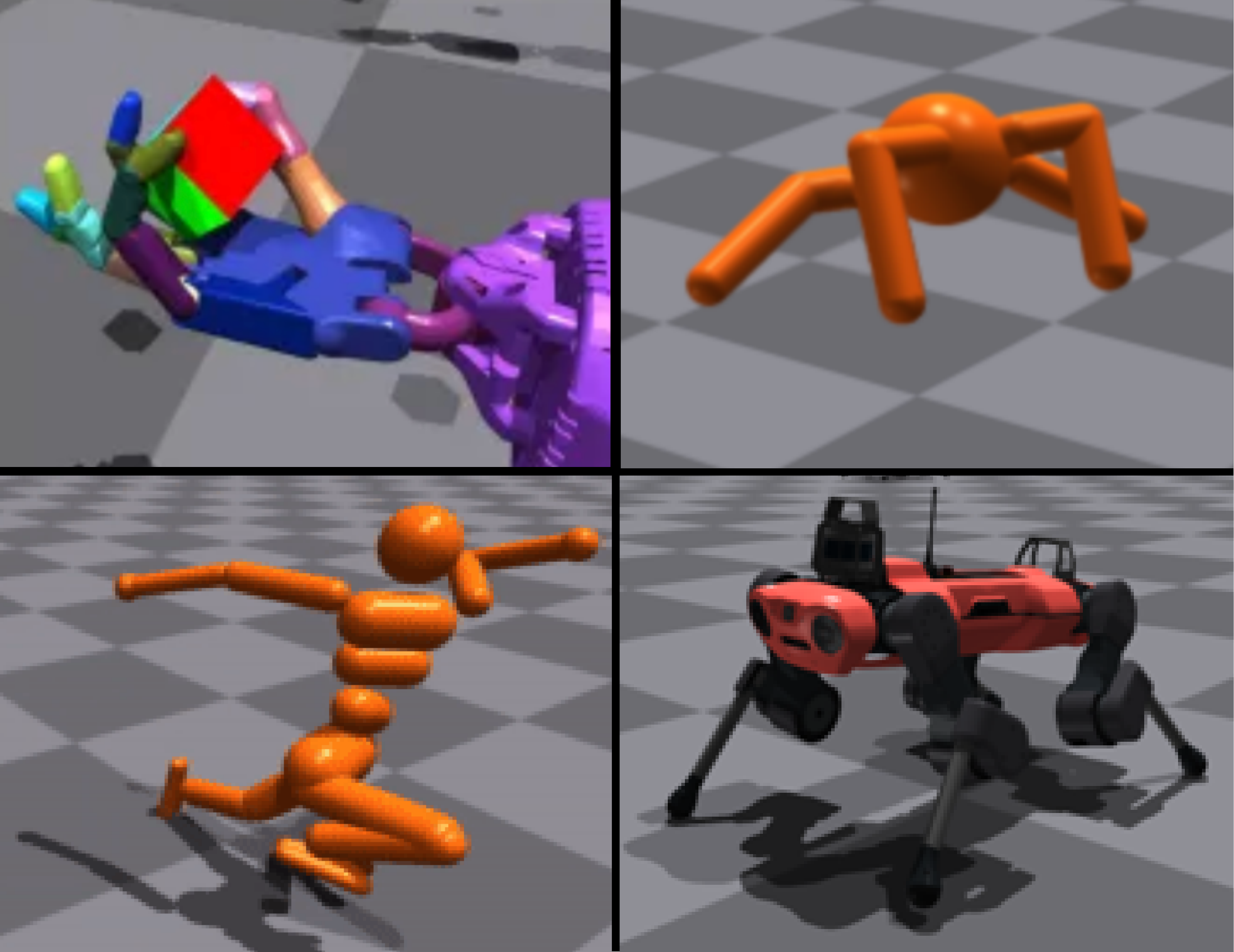}%
    \hspace{0.3cm}
    \includegraphics[width=0.3\textwidth]{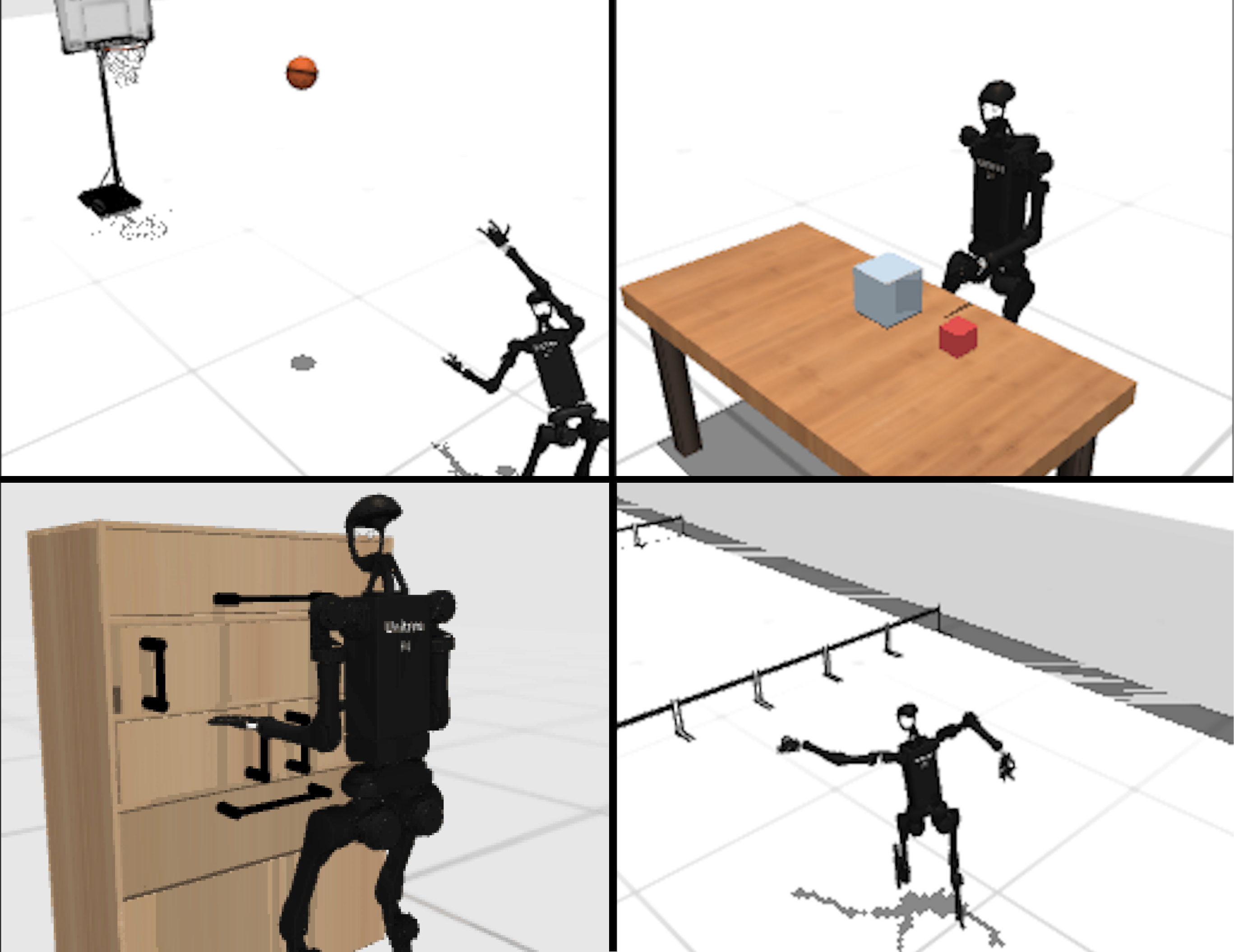}%
    \hspace{0.3cm}
    \includegraphics[width=0.3\textwidth]{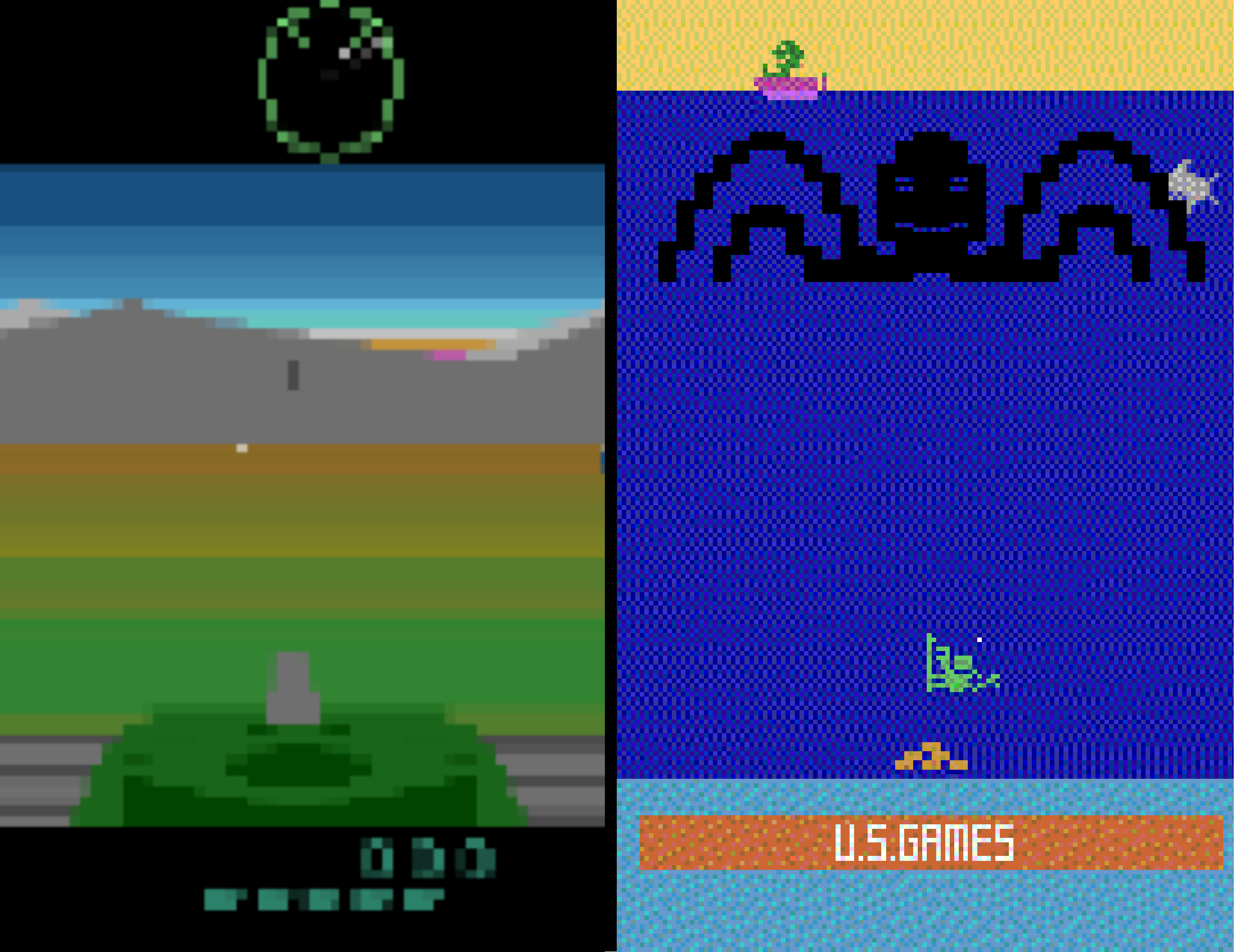}%
    \vspace{-0.1cm}
    \caption{\textbf{
    Environment Visualizations.} We evaluate SEM across three benchmark suites such as Isaac Gym, HumanoidBench, and Atari. The first two cover state-based locomotion/manipulation; Atari introduces pixel-based games of varying complexity.}
    \label{fig:benchmark}
    \vspace{-0.18cm}
\end{figure*}

\subsubsection{Isaac Gym}
\label{appx:benchmark_isaac}

For our experiments, we used the original Isaac Gym benchmark, which provides pre-built stand-alone environments and runs entirely on the GPU via a PhysX backend. This setup enables both physics simulation and neural network policy training on the GPU, offering high-throughput evaluation. Although Isaac Gym is deprecated, we used it to ensure reproducibility, specifically running the PPO algorithm from CleanRL on tasks spanning locomotion, robotic hands, and cube stacking (See Figure~\ref{fig:benchmark}). To reproduce this task, we follow the PPO hyperparameters from CleanRL for Isaac, as presented in Table~\ref{isaac-table}.

\subsubsection{HumanoidBench}
\label{appx:benchmark_humanoid}

In our experiments, we used the Humanoid Benchmark, a suite of tasks for evaluating humanoid robot control across locomotion and manipulation, implemented on the MuJoCo physics engine. We focused on three robot configurations: the Unitree H1 without hands (26 DoF), the Unitree H1 with hands (76 DoF), and the Unitree G1 with three-finger hands (44 DoF). The benchmark defines 27 core tasks, and additionally, sit, balance, and bookshelf are each implemented in both simple and hard variants, while insert is implemented in small and normal configurations. This brings the total to 31 tasks. Our evaluations covered locomotion challenges, including walking, running, crawling, stair climbing, and balancing, and whole-body manipulation tasks such as opening doors, lifting packages, operating kitchen objects, and performing insertions. Together, these tasks provided a diverse and rigorous testing ground for our study of humanoid control (See Figure~\ref{fig:benchmark}). To reproduce this task, we follow the fastTD3 hyperparameters \citep{seo2025fasttd3}, as presented in Table~\ref{humanoid-table}.



\subsubsection{Atari}
\label{appx:benchmark_atari}

We conducted pixel-based reinforcement learning experiments using the Arcade Learning Environment (ALE) \citep{bellemare2013arcade}. We start from the Atari-10 suite \citep{aitchison2023atari} and extend it with additional environments from the Atari-20 suite \citep{fedus_replay}, yielding a broader subset of the ALE benchmark. In total, we evaluate 28 games spanning varying difficulty levels, trained with PPO from CleanRL \citep{huang2022cleanrl} as the baseline. Two games overlap across the subsets (Bowling and Q*bert). We report IQM returns (see Figure~\ref{fig:aggregate_performance2}, Left) following the evaluation protocol of \citep{agarwal2021deep,ceron2024mixtures,ceron2024value,sokar2025mind}.

\subsubsection{MTBench}
\label{appx:benchmark_mtbench}

In our experiments, we used the Multi-Task Benchmark for Robotics, an open-source suite built on the GPU-accelerated Isaac Gym simulator. Specifically, we worked with the 50 manipulation tasks adapted from Meta-World, where a single-armed robot interacts with one or two objects through actions such as pushing, picking, and placing. Each task provides parametric variations in object initialization and target positions, adding diversity and complexity. For evaluation, we adopted the MT50 setting, which encompasses the full set of 50 tasks.

\subsubsection{Booster Gym humanoid robot}
\label{appx:benchmark_mtbench}

As reported in the FastTD3 paper \citep{seo2025fasttd3}, the authors configured the Booster Gym humanoid robot to operate within the MuJoCo Playground environment, which supports 12 degrees of freedom (DOF). They trained the policy entirely in simulation and successfully transferred it to a real robot, demonstrating an effective sim-to-real deployment of an off-policy RL method on a full-sized humanoid. In our experiments, we used the same robot model and hyperparameters as FastTD3, also within MuJoCo Playground. Our configuration led to faster convergence and improved performance in simulation compared to the reported baseline.

\section{Deep RL Network Architectures}
\label{appx:dnl_arquitecture}

\subsubsection{MLP}
\label{appx:mlp}
We modified the FastTD3 architecture, specifically in the actor–critic design, where both networks are implemented as multilayer perceptrons (MLPs). The critic receives concatenated observation–action inputs, while the actor processes only the observations. In both cases, the inputs first pass through two linear layers with ReLU activations. At this point, we introduced the SEM mechanism, which can be enabled or disabled, and applied selectively to the actor, the critic, or both. For the critic, if SEM is not used, the representation is processed by a sequence of \texttt{Linear→ReLU→Linear} layers, with the final linear layer outputting dimension $\textit{num\_atoms}$. If SEM is enabled, the sequence becomes \texttt{SEM→Linear}, again producing an output of size $\textit{num\_atoms}$. For the actor, the representation without SEM follows a \texttt{Linear→ReLU→Linear→Tanh} sequence, while with SEM it follows \texttt{SEM→Linear→Tanh}, where the final \texttt{Tanh} ensures bounded continuous actions. In Table~\ref{mlp-table}, we present the fixed hyperparameters used across all environments. Other hyperparameters, such as $\textit{num\_atoms}$ or $\textit{num\_env}$, varied depending on the environment, in which case we adopted the values proposed by \citep{seo2025fasttd3}. 

\begin{table}[t]
\caption{Default hyper-parameters setting for actor-critic MLP}
\label{mlp-table}
\begin{center}
\begin{tabular}{ll}
\multicolumn{1}{c}{\bf Hyper-parameter}  &\multicolumn{1}{c}{\bf Value}
\\ \hline \\
Critic Hidden Dim & 1024\\
Actor Hidden Dim &  512\\
Critic Learning Rate & 3e-4 \\
Actor Learning Rate & 3e-4 \\
\end{tabular}
\end{center}
\end{table}

\subsubsection{CNN}
\label{appx:cnn}

For our pixel-based experiments, we modified the PPO implementation from CleanRL, which follows an actor–critic design. The shared backbone consists of three convolutional layers, each followed by a ReLU activation, producing a flattened representation that is then processed by a two-layer MLP with ReLU activations. This representation is used by both the actor and the critic. In our intervention, we introduced the SEM block into the actor: when enabled, the representation passes through the SEM block before a final linear layer; when disabled, it follows a \texttt{Linear→ReLU→Linear} sequence. The critic remains unchanged, while the actor architecture is varied depending on the use of SEM. We adopted the PPO hyperparameters for Atari from CleanRL \citep{huang2022cleanrl}, as summarized in Table~\ref{ppo-table}.

\section{Ablation Setup}
\label{appx:ablations_setup}
Given our constrained computational budget, we performed experiments on a subset of HumanoidBench, consisting of five robotics tasks. These tasks are part of the benchmark evaluated with FastTD3 \citep{seo2025fasttd3} and correspond to \texttt{h1hand-\{walk, stand, run, stair, slide\}-v0}. All ablation experiments were conducted on this subset using six random seeds (see \autoref{fig:ablationsA} and \autoref{fig:ablationsB}).

\begin{figure*}[!h]
    \centering
    \includegraphics[width=0.32\textwidth]{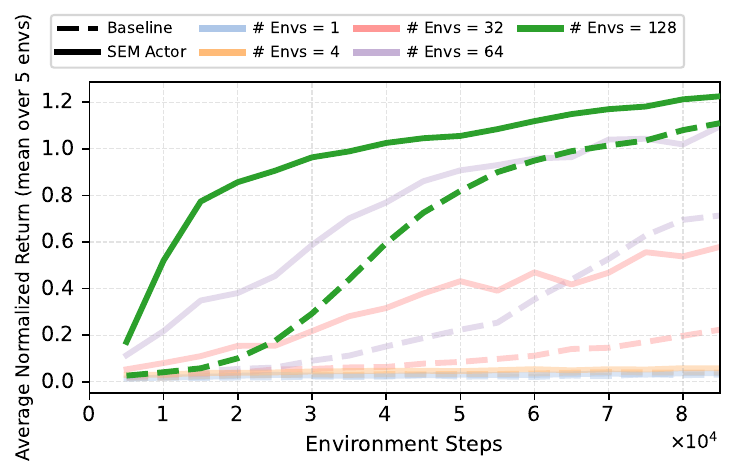}%
    \includegraphics[width=0.32\textwidth]{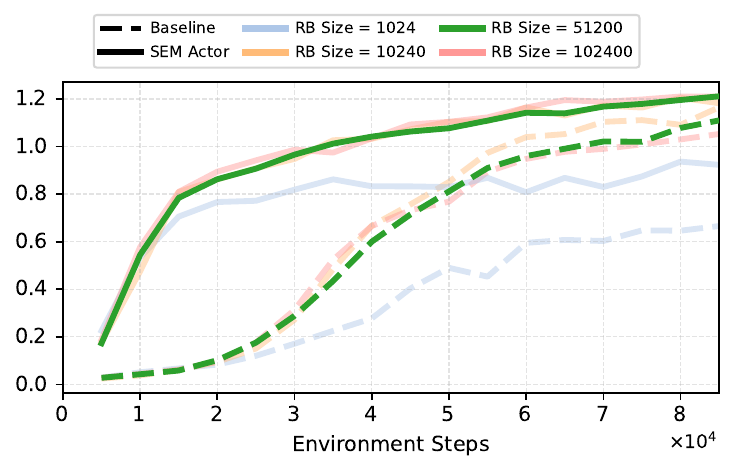}%
    \includegraphics[width=0.32\textwidth]{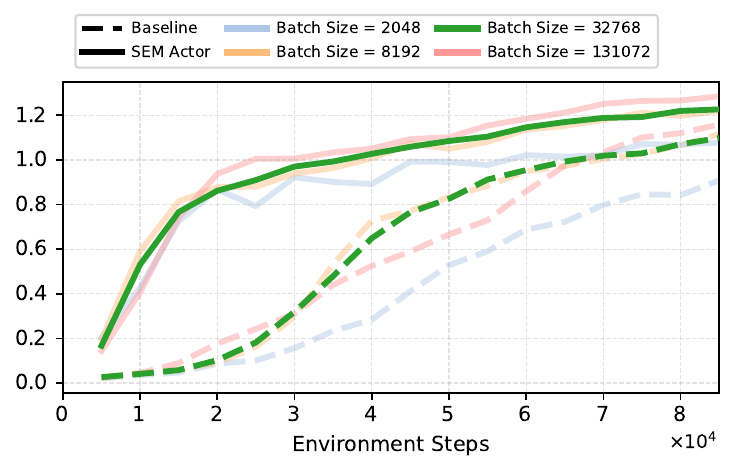}%
    \vspace{-0.4cm}
    \caption{\textbf{Effect of core hyperparameters.} 
    SEM Actor compared to the baseline across (left) number of parallel environments, 
    (middle) replay buffer size, and (right) batch size. 
    SEM consistently scales better and achieves higher returns.
    }
    \label{fig:ablationsA}
\end{figure*}

\begin{figure*}[!h]
    \centering
    \includegraphics[width=0.32\textwidth]{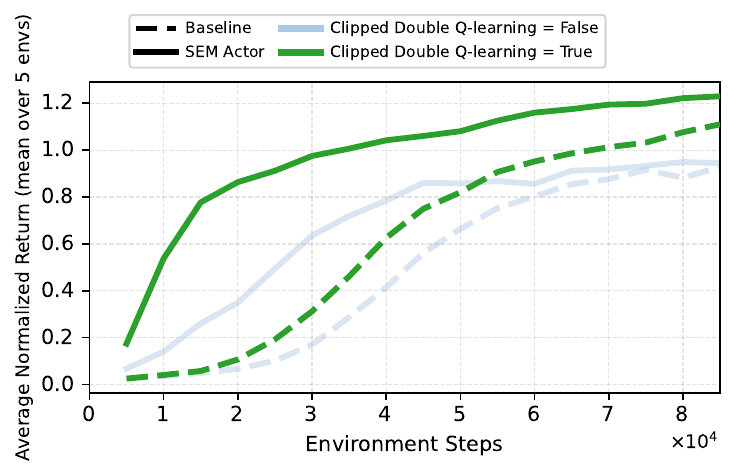}%
    \includegraphics[width=0.32\textwidth]{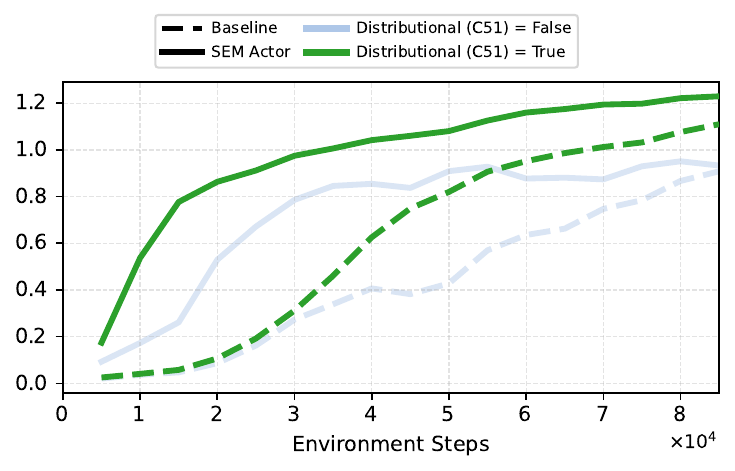}%
    \includegraphics[width=0.32\textwidth]{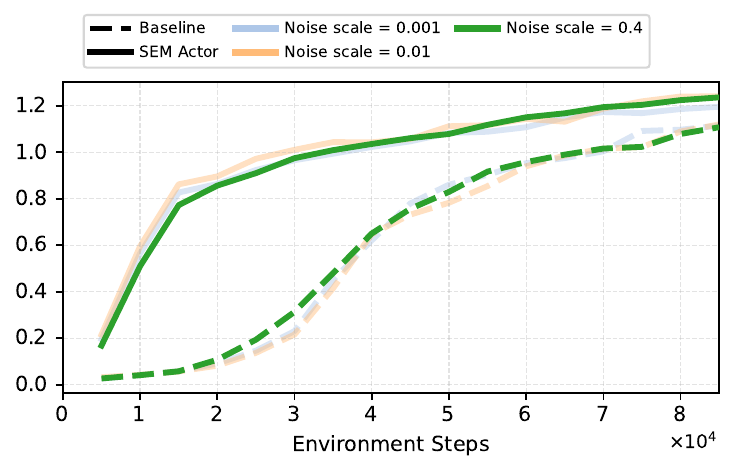}%
    \vspace{-0.4cm}
    \caption{\textbf{Robustness to design choices.} 
    SEM Actor vs. baseline across (left) clipped double Q-learning, 
    (middle) distributional critic (C51), and (right) exploration noise scale. 
    SEM remains robust, while the baseline is more sensitive.}
    \label{fig:ablationsB}
\end{figure*}

\section{Metrics}
\label{appx:metrics}

To better understand the dynamics of training and the quality of learned representations, we report a diverse set of metrics beyond standard returns. These measures capture complementary aspects of learning, including representation diversity, network expressivity, parameter stability, gradient behavior and sampling efficiency. Results of these analyses are provided in \autoref{sec:understanding}.

\subsection{Feature Rank}
\label{appx:featurerank}

This metric assesses the quality of learned representations in deep RL by identifying the smallest subspace that retains 99\% of the variance, thereby enhancing interpretability, efficiency, and stability. A higher feature rank indicates more diverse representations. The computation follows the approximate rank from \citep{yang2019harnessing,moalla2024no}:
\[
\sum_{i=1}^{k} \frac{\sigma_i^2}{\sum_{j=1}^{n} \sigma_j^2} \geq \tau,
\]
where \(\sigma_i\) are the singular values of the feature matrix, \(n\) is the total number of singular values, and \(\tau\) is the variance threshold (e.g., 99\%). The feature rank \(k\) is the smallest number of principal components required to preserve at least \(\tau\) of the total variance.

\subsection{Dormant Neurons}
\label{appx:dormanneuron}

This metric quantifies the proportion of neurons with near-zero activations, which limits the network’s expressivity. It serves to detect inefficiencies in learning, as a high proportion of dormant neurons implies that many units are inactive or rarely contribute to the output. The computation follows \citep{sokar2023dormant}:
\[
\frac{\sum_{i=1}^N \mathbf{1}(|a_i| < \epsilon)}{N} \times 100,
\]
where \(N\) is the total number of neurons, \(a_i\) is the activation of neuron \(i\), \(\epsilon\) is a small threshold (e.g., \(10^{-5}\)), and \(\mathbf{1}\) is the indicator function.

\subsection{Weight Norm}
\label{appx:weight_norm}

This metric measures the magnitude of neural network weights, providing insight into model complexity, stability, and overfitting risk. Large weight norms indicate parameters with high magnitudes, which may hinder generalization. The metric is computed as in \citep{moalla2024no,lyle2021understanding}:
\[
\|\theta\|_2 = \sqrt{\sum_{i} \theta_i^2},
\]
where \(\theta_i\) are the weights of a given layer.

\subsection{Gini Sparsity}
\label{appx:gini_sparcity}

The Gini metric is used to quantify the sparsity of neural representations. A high Gini value indicates a sparse representation, where only a few neurons are strongly active while most remain near zero; this often improves interpretability, makes more efficient use of network capacity, and can help reduce overfitting. In contrast, a low Gini value corresponds to dense representations, where many neurons are active simultaneously, allowing the network to capture richer information but often at the cost of reduced interpretability and potentially noisier features. In practice, we observed a direct relationship between the Gini metric and the return when using SEM, with better performance associated with higher Gini values. The Gini value is computed using the following equation.

\[
G \;=\; 1 + \frac{1}{n} \;-\; \frac{2}{n \, \sum_{i=1}^n v_i} \; \sum_{i=1}^n (n+1-i)\, v_{(i)}
\]

where 
where 
\[\; v = \big( |x_1|, |x_2|, \dots, |x_n| \big) \;\; 
\] 

It is the vector of all activations, taken in absolute value and stacked into one vector. The Gini metric has been explored in the papers \citep{hurley2009comparing, zonoobi2011gini}.

\subsection{Cramer distance}
\label{appx:cramer}
The Cramér distance is defined as the squared $L_{2}$ distance between the cumulative distribution functions (CDFs) of two probability distributions. When the distributions are similar, their CDFs overlap closely and the Cramér distance approaches zero. Conversely, when the distributions differ, the CDFs diverge and the distance increases. In practice, a lower Cramér distance indicates that the learned distribution is closer to the target distribution, which is desirable. Empirical results also suggest a correlation between lower Cramér distance and improved returns. This measure is computed using the following equation: 
\[
D_{\text{Cramér}}^2(p_1, p_2) \;=\; \sum_{j=1}^{n} \Big( F_{p_1}(z_j) - F_{p_2}(z_j) \Big)^{2} \, \Delta z
\]
where $p_1$ and $p_2$ are probability distributions, and 
$F_{p_1}, F_{p_2}$ denote their corresponding cumulative distribution functions (CDFs).

\subsection{Entropy}
\label{appx:entropy}

This metric measures the average entropy of the representations. High entropy indicates that the representation is more dispersed, less concentrated, and carries more uncertainty. Low entropy corresponds to a more concrete representation, with higher sparsity. In practice, we observe a relationship where lower entropy is associated with better returns and a higher Gini measure. This metric is defined by the following equation;
\[
p_{i,j} = \frac{p_{i,j}}{\sum_{k} p_{i,k} + \varepsilon}
\]
\[
\text{entropy} = \frac{1}{B} \sum_{i=1}^{B} \left( - \sum_{j} p_{i,j} \log(p_{i,j} + \varepsilon) \right)
\]
where $B$ is the batch size, and $p$ is the non-negative representation normalized to form a probability distribution.

\section{Additional Baselines}
\label{appx:experiments}

We expand our comparison to include a broader set of interventions commonly used to address plasticity loss, representation collapse, and optimization pathologies in deep RL. Following the setup described in \autoref{sec:understanding} (\textit{Comparing SEM to other Regularization Methods}), we evaluate eight additional methods: Reset~\citep{nikishin2022primacy}, L2 regularization~\citep{kumaroffline}, ReGen~\citep{kumar2023maintaining}, Dropout~\citep{hendrycks2016gaussian}, Shrink-and-Perturb~\citep{ash2020warm}, GELU activation~\citep{hinton2012improving}, Weight Clipping~\citep{elsayed2024weight}, and Spectral Normalization~\citep{gogianu2021spectral}. Each method is evaluated across $6$ random seeds and $5$ HumanoidBench tasks.

These experiments are \emph{in addition} to those reported in \autoref{fig:humanoid-baseline-vary-LV}, which already include other relevant baselines such as Gumbel with straight-through estimation~\citep{jang2017categoricalreparameterizationgumbelsoftmax,maddison2017concretedistributioncontinuousrelaxation}, Vector Quantization~\citep{oord2018neuraldiscreterepresentationlearning}, and C-ReLU~\citep{abbas2023loss}. We demonstrate that SEM offers a lightweight, stable, and effective mechanism for mitigating representation collapse in deep RL without requiring complex architectural modifications or extensive hyperparameter tuning.

\subsection{Regen}
\label{appx:regen}
We applied ReGen by zeroing tiny weights using three thresholds, $\tau = 10^{-5}, 10^{-6}$ and $\tau = 10^{-7}$, allowing us to compare how different magnitudes influence sparsity and model stability \citep{kumar2023maintaining}.

\begin{figure*}[!h]
    \centering
    \includegraphics[width=0.9\textwidth]{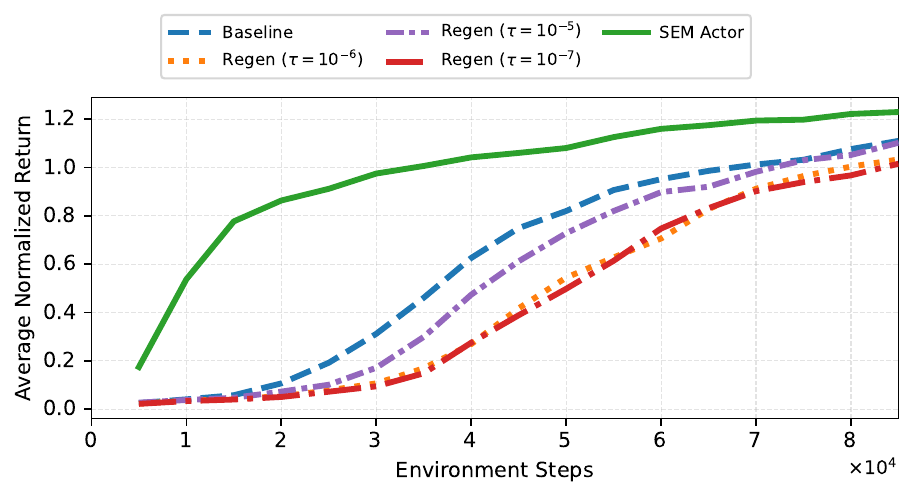}
    \vspace{-0.4cm}
    \caption{\textbf{ Average normalized return on 5 HumanoidBench tasks over 6 seeds.} We compare the Baseline and ReGen variants with SEM Actor. SEM variation consistently outperforms all ReGen thresholds, achieving faster learning and higher normalized returns across training steps.}
    \label{fig:actor-regen}
    \vspace{-0.4cm}
\end{figure*}

\subsection{Shrink and Perturb}
\label{appx:shrink}
We ran Shrink and Perturb with shrink factors 0.99, 0.9, 0.8, and 0.5 to compare its performance and training behavior directly against our proposed method \citep{ash2020warm}.

\begin{figure*}[!h]
    \centering
    \includegraphics[width=0.9\textwidth]{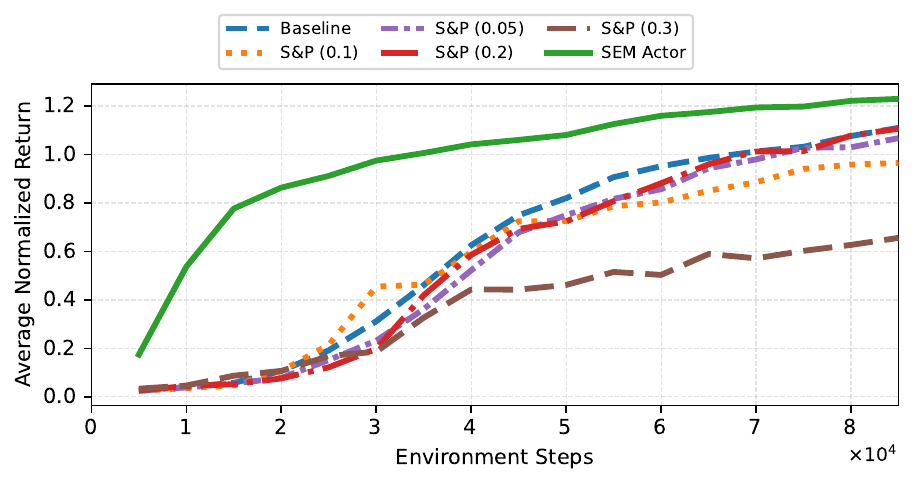}
    \vspace{-0.4cm}
    \caption{\textbf{Average normalized return on 5 HumanoidBench tasks over 6 seeds.} We compare the Baseline and Shrink-and-Perturb variants with SEM Actor. SEM variation maintains a clear advantage across all S\&P levels, achieving faster learning and higher normalized returns.
    \label{fig:actor-weight_clip}}
    \vspace{-0.4cm}
\end{figure*}

\newpage
\subsection{Weight Clip}
\label{appx:Weight}
We ran Weight Clipping with ranges $(-0.01,\, 0.01)$, $(-0.001,\, 0.001)$, and $(-0.1,\, 0.1)$ to assess how different clipping strengths perform and to compare them directly against our proposed method \citep{elsayed2024weight}.

\begin{figure*}[!h]
    \centering
    \includegraphics[width=0.9\textwidth]{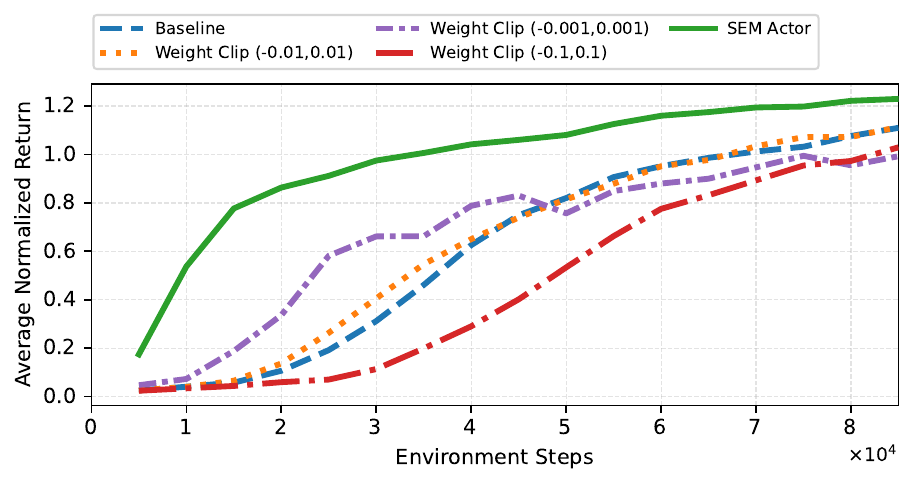}
    \vspace{-0.4cm}
    \caption{\textbf{Average normalized return on 5 HumanoidBench tasks over 6 seeds.} We compare the Baseline and Weight Clipping variants with SEM Actor. SEM variation achieves faster improvement and higher normalized returns across environment steps.}
    \label{fig:actor-weight_clip}
    \vspace{-0.4cm}
\end{figure*}

\subsection{Reset Layer}
\label{appx:Reset}

We applied the reset layer baseline by reinitializing weights with Xavier initialization, allowing us to evaluate how fully resetting a layer compares to the performance and stability of our proposed method \citep{nikishin2022primacy}.

\begin{figure*}[!h]
    \centering
    \includegraphics[width=0.9\textwidth]{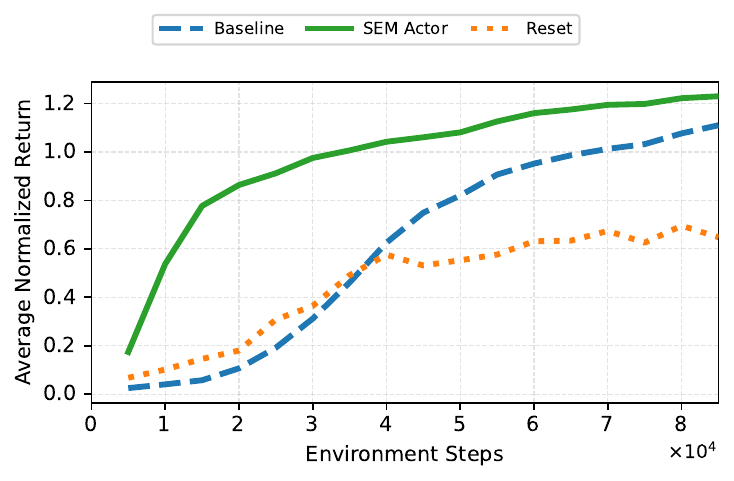}
    \vspace{-0.4cm}
    \caption{\textbf{Average normalized return on 5 HumanoidBench tasks over 6 seeds.} We compare the Baseline and Reset Layer with SEM Actor. SEM variation accelerates learning and improves normalized return across environment steps, outperforming both the baseline and the reset-layer strategy.}
    \label{fig:actor-reset}
    \vspace{-0.4cm}
\end{figure*}

\newpage
\subsection{Spectral Norm}
\label{appx:Spectral}
We applied Spectral Normalization to constrain layer norms and stabilize training, allowing us to directly compare its behavior and performance against the results achieved by our proposed method \citep{gogianu2021spectral}.

\begin{figure*}[!h]
    \centering
    \includegraphics[width=0.9\textwidth]{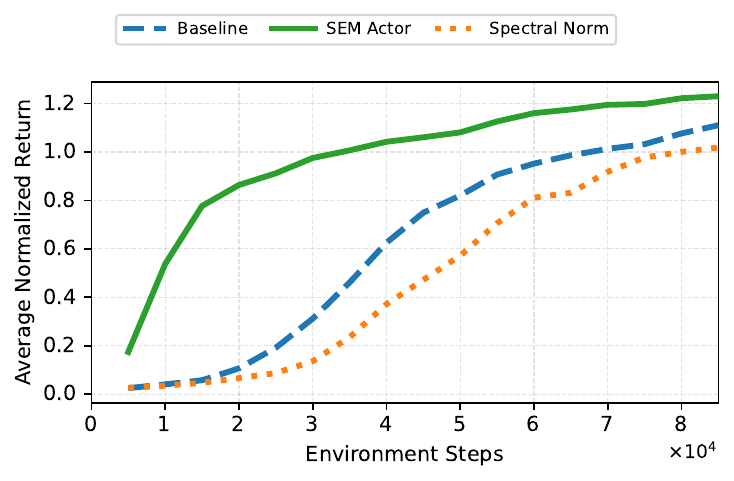}
    \vspace{-0.4cm}
    \caption{\textbf{ Average normalized return on 5 HumanoidBench tasks over 6 seeds.} We compare the Baseline and Spectral Norm with SEM Actor. SEM variation outperforms both the baseline and Spectral Norm across training, achieving faster learning and higher normalized returns.}
    \label{fig:actor-spectral}
    \vspace{-0.4cm}
\end{figure*}

\subsection{Feature norm}
\label{appx:Feature} We applied Feature Norm by normalizing activations to unit L2 norm, allowing us to evaluate its effect on representation stability and directly compare its behavior with the performance of our proposed method \citep{kumaroffline}.

\begin{figure*}[!h]
    \centering
    \includegraphics[width=0.85\textwidth]{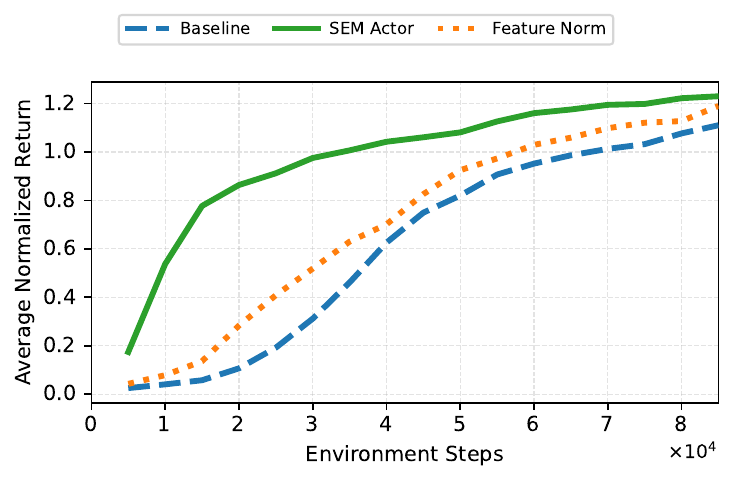}
    \vspace{-0.4cm}
    \caption{\textbf{ Average normalized return on 5 HumanoidBench tasks over 6 seeds.}  We compare the Baseline and Feature Norm with SEM Actor. SEM variation outperforms both methods, showing faster learning and higher normalized returns throughout training.}
    \label{fig:actor-featurenorm}
    \vspace{-0.4cm}
\end{figure*}

\subsection{Gelu}
\label{appx:Gelu} We evaluated the GELU activation to assess its impact on learning dynamics and representation quality, enabling a direct comparison between this widely used baseline and the performance achieved by our proposed method \citep{hinton2012improving}.

\begin{figure*}[!h]
    \centering
    \includegraphics[width=0.85\textwidth]{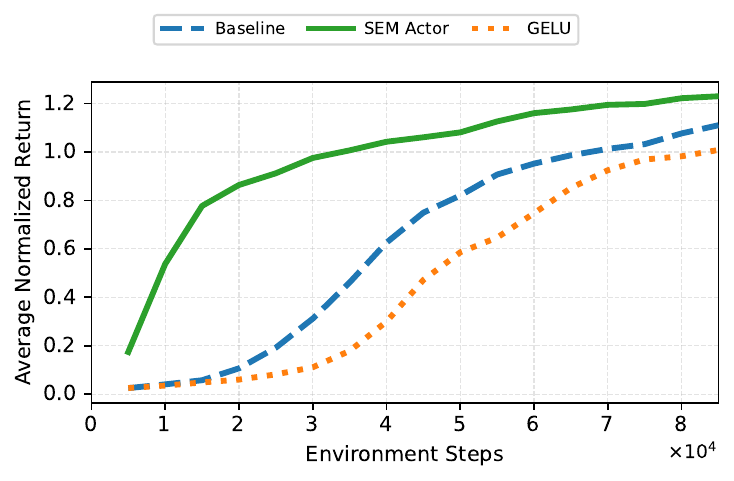}
    \vspace{-0.4cm}
    \caption{\textbf{Average normalized return on 5 HumanoidBench tasks over 6 seeds.} We compare the Baseline and GELU activation with SEM Actor. SEM variation  consistently surpasses both approaches, achieving faster learning and higher normalized returns across training.
    \label{fig:actor-gelu}}
    \vspace{-0.4cm}
\end{figure*}

\newpage
\subsection{Dropout}
\label{appx:dropout}
We evaluated Dropout using probabilities 0.1, 0.05, and 0.2 to assess its regularization behavior and directly compare its performance and stability against the results achieved by our proposed method \citep{hendrycks2016gaussian}.

\begin{figure*}[!h]
    \centering
    \includegraphics[width=0.8\textwidth]{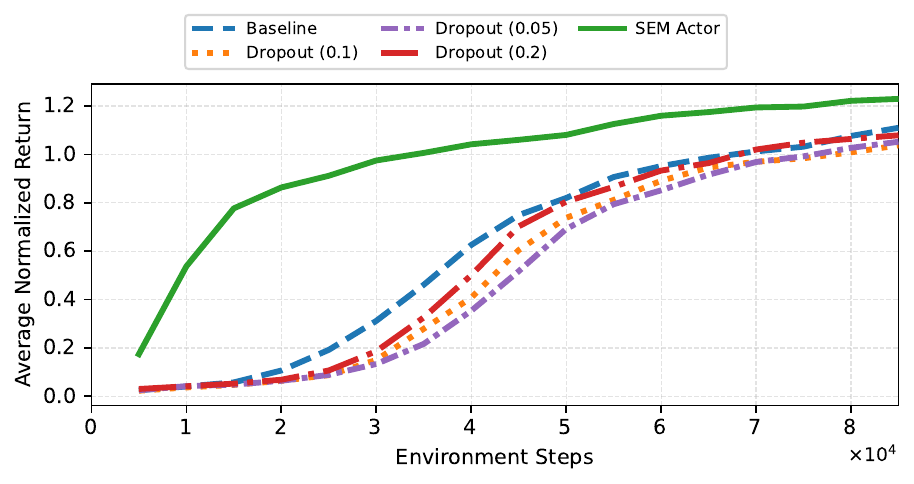}
    \vspace{-0.4cm}
    \caption{\textbf{Average normalized return on 5 HumanoidBench tasks over 6 seeds.} We compare the Baseline and Dropout variants with SEM Actor. SEM variation  consistently shows faster learning progress and achieves higher normalized returns across training.}
    \label{fig:actor-dropout}
    \vspace{-0.4cm}
\end{figure*}

\subsection{MoEs}
\label{appx:moes}
We evaluate Mixture-of-Experts (MoE) architectures with 1 and 4 experts to study how expert multiplicity influences model capacity, learning dynamics, and overall stability \citep{ceron2024mixtures,willi2024mixture}. These comparisons allow us to assess whether increasing routing flexibility or expert specialization can match the gains provided by SEM.
As shown in \autoref{fig:moes}, both SoftMoE and Top1-MoE track the baseline closely during training, with modest improvements in later stages for the 4-expert configuration. However, none of the MoE variants approach the sample efficiency or final performance achieved by SEM Actor. SEM consistently learns faster and attains higher normalized returns.

\begin{figure*}[!h]
    \centering
    \includegraphics[width=0.49\textwidth]{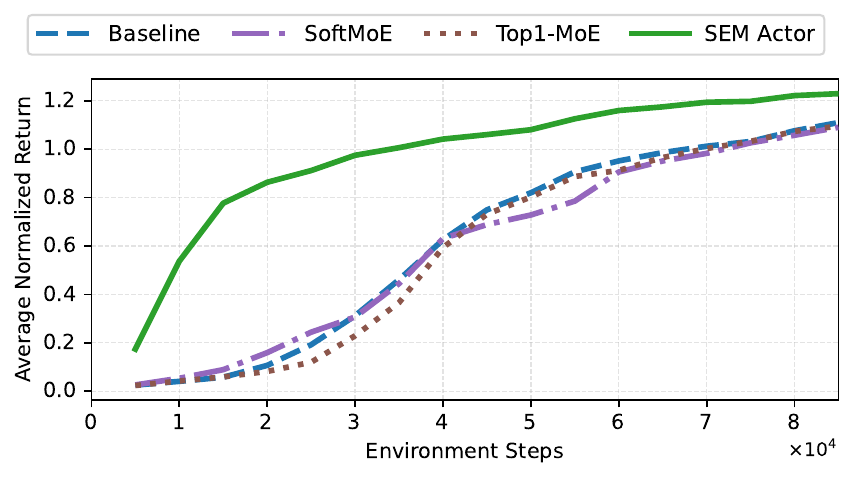}%
    \includegraphics[width=0.49\textwidth]{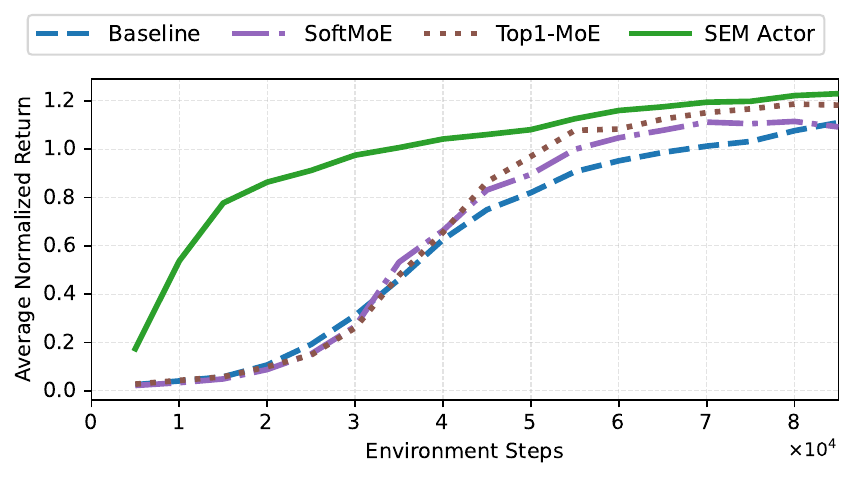}%
    \vspace{-0.4cm}
    \caption{\textbf{Average normalized return on 5 HumanoidBench tasks over 6 seeds.} We compare the Baseline, SoftMoE, Top1-MoE, and SEM Actor. The left plot shows MoE models with 1 expert, and the right plot shows models with 4 experts. MoE variants offer limited gains over the baseline and show slower learning dynamics, while SEM consistently achieves faster progress and higher final returns throughout training.}
    \label{fig:moes}
    \vspace{-0.18cm}
\end{figure*}

\subsection{ReDO}
\label{appx:redo}
We evaluated ReDo \citep{sokar2023dormant} by varying its dormancy thresholds (0.1, 0.01, 0.001) to study the sensitivity of neuron-reset frequency and to compare its stability and performance to SEM. These thresholds control how aggressively inactive neurons are reset, allowing us to examine whether ReDo’s representational refreshing mechanism can match the benefits provided by SEM. As shown in \autoref{fig:redo}, all ReDo variants track the baseline closely throughout training, with limited improvements in early or final performance. In contrast, SEM Actor consistently learns faster and achieves higher normalized returns.

\begin{figure*}[!h]
    \centering
    \includegraphics[width=0.7\textwidth]{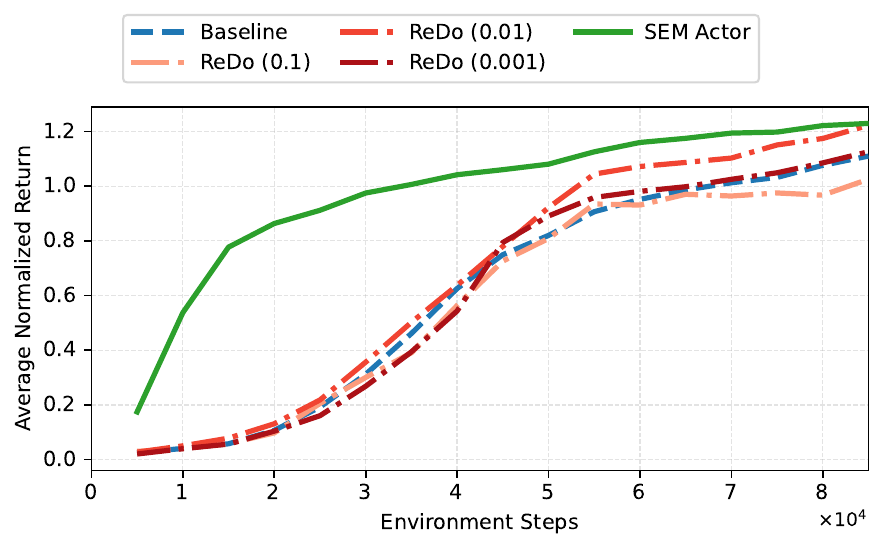}%
    \vspace{-0.4cm}
    \caption{\textbf{Average normalized return on 5 HumanoidBench tasks over 6 seeds.} We compare the Baseline, three ReDo variants (thresholds 0.1, 0.01, 0.001), and SEM Actor. All ReDo configurations remain close to the baseline, showing limited gains across training. SEM achieves faster learning and higher final returns, highlighting the effectiveness of structured embedding modifications over neuron-reset–based approaches.}
    \label{fig:redo}
    \vspace{-0.18cm}
\end{figure*}

\subsection{Magnitude Pruning}
\label{appx:pruning}
We evaluate Magnitude Pruning under two target sparsity levels (50\% and 95\%) to analyze how pruning severity affects performance and to provide a direct comparison with our proposed method \citep{graesser2022state,ceron2024value}. \autoref{fig:pruning} reports results on $5$ HumanoidBench tasks.
Overall, both pruning variants lag behind SEM Actor across the entire training horizon. Moderate pruning (50\%) retains reasonable learning progress but consistently underperforms SEM in both sample efficiency and final returns. Severe pruning (95\%) leads to an early collapse in training, indicating that aggressive weight removal significantly harms representational capacity in this setting. In contrast, SEM maintains stable optimization and achieves substantially higher returns, highlighting the robustness of structured embedding modifications compared to unstructured pruning.

\begin{figure*}[!h]
    \centering
    \includegraphics[width=0.7\textwidth]{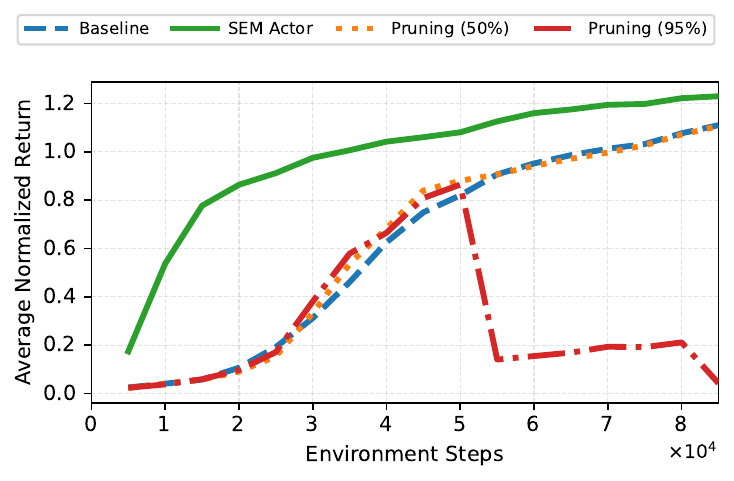}%
    \vspace{-0.4cm}
    \caption{\textbf{Average normalized return on $5$ HumanoidBench tasks over 6 seeds.} We compare the Baseline, SEM Actor, and Magnitude Pruning at 50\% and 95\% target sparsity. SEM consistently learns faster and reaches higher final returns than both pruning variants. Moderate pruning (50\%) provides limited gains but remains inferior to SEM, while high-sparsity pruning (95\%) collapses training early, demonstrating the instability of aggressive unstructured pruning.}
    \label{fig:pruning}
    \vspace{-0.18cm}
\end{figure*}

\newpage
\section{Hyperparameters}
\label{appx:hyper}
In this section, we list the hyperparameters used across our experimental settings.

\begin{table}[!h]
\caption{Wall-clock training time (hh:mm) for the \textbf{actor} on the H1-hand humanoid benchmark under default settings. We compare FastTD3 and FastTD3+SEM; lower is better.}

\label{time-table}
\centering
\begin{tabular}{lcc}
\multicolumn{1}{c}{} & \multicolumn{2}{c}{\bf Actor} \\ \cline{2-3}
\bf Game & \it FastTD3 & \it FastTD3+SEM \\
\hline
h1hand-walk & 2:31 h & 2:42 h \\
h1hand-stand & 2:29 h & 2:20 h \\
h1hand-run & 2:46 h & 2:34 h \\
h1hand-stair & 4:09 h & 4:13 h \\ 
h1hand-slide & 5:35 h & 5:24 h\\ 
\label{tbl:wallClockTime}
\end{tabular}
\end{table}

\begin{table}[!h]
\caption{Default hyperparameter settings for the PPO agent on Isaac Gym.}
\label{isaac-table}
\begin{center}
\begin{tabular}{ll}
\multicolumn{1}{c}{\bf Hyper-parameter}  &\multicolumn{1}{c}{\bf Value}
\\ \hline \\
 Adam's ($\epsilon$) & 1e-5\\
 Adam's learning rate &  2.6e-3\\
 Dense Activation Function & Tanh \\
 Dense Width & 256 \\
 Discount Factor & 0.99\\
 Number of Dense Layers & 3\\
 Number of environments & 4096\\
\end{tabular}
\end{center}
\end{table}

\begin{table}[!h]
\caption{Default hyperparameter settings for the fastTD3 agent on the humanoid bench.}
\label{humanoid-table}
\begin{center}
\begin{tabular}{ll}
\multicolumn{1}{c}{\bf Hyper-parameter}  &\multicolumn{1}{c}{\bf Value}
\\ \hline \\
Critic Hidden Dim & 1024\\
Actor Hidden Dim &  512\\
Critic Learning Rate & 3e-4 \\
Actor Learning Rate & 3e-4 \\
Discount Factor & 0.99\\
Dense Activation Function & ReLU \\
Number of Dense Layers & 4\\
Number of environments & 128\\
Number of atoms & 101 \\
\end{tabular}
\end{center}
\end{table}

\begin{table}[!h]
\caption{Default hyperparameter settings for the PPO agent on Atari.}
\label{ppo-table}
\begin{center}
\begin{tabular}{ll}
\multicolumn{1}{c}{\bf Hyper-parameter}  &\multicolumn{1}{c}{\bf Value}
\\ \hline \\
 Adam's ($\epsilon$) & 1e-5\\
 Adam's learning rate &  2.5e-4\\
 Conv. Activation Function & ReLU \\
 Convolutional Width & 32,64,64\\
 Dense Activation Function & ReLU \\
 Dense Width & 512 \\
 Normalization & None\\
 Discount Factor & 0.99\\
 Number of Convolutional Layers & 3\\
 Number of Dense Layers & 2\\
 Reward Clipping & True\\
 Weight Decay & 0 \\
\end{tabular}
\end{center}
\end{table}

\newpage
\section{Learning Curves for Each Game}
\label{appx:learning_curves}

To complement the aggregate results reported in the main text (see \autoref{sec:understanding} and \autoref{sec:results}), we provide full learning curves for each environment in the benchmark. These plots illustrate training dynamics across seeds and highlight differences in sample efficiency and stability between +SEM and its corresponding baseline (FastTD3/FastTD3-SimbaV2/FastSAC). The set of robotics tasks follows those used in the FastTD3 benchmark \citep{seo2025fasttd3}.

\begin{figure*}[!h]
    \centering
    \includegraphics[width=\textwidth]{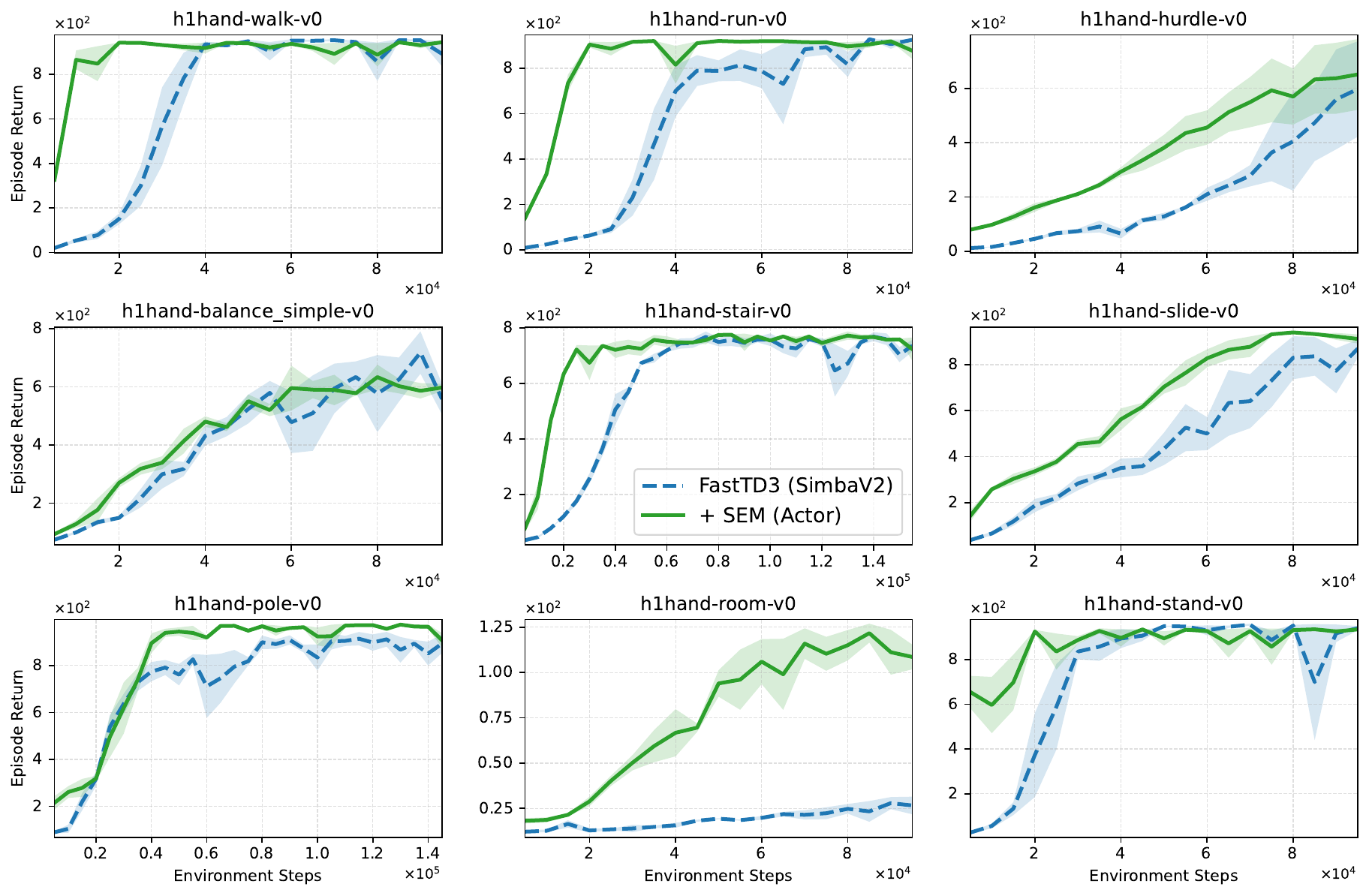}%
    \vspace{-0.4cm}
    \caption{\textbf{Learning curves on 9 \texttt{h1hand} tasks. \hlSmall{FastTD3+SimbaV2 (\textbf{blue}, - -)} vs. \hlMedium{+ SEM (Actor) (\textbf{green}, ---).}} Curves show the mean episode return across 6 seeds. Axes are independently scaled per subplot for readability. \hlMedium{SEM (Actor)} consistently accelerates learning and achieves higher or comparable final returns on most tasks.
}
    \label{fig:}
    \vspace{-0.18cm}
\end{figure*}

\begin{figure*}[!h]
    \centering
    \includegraphics[width=\textwidth]{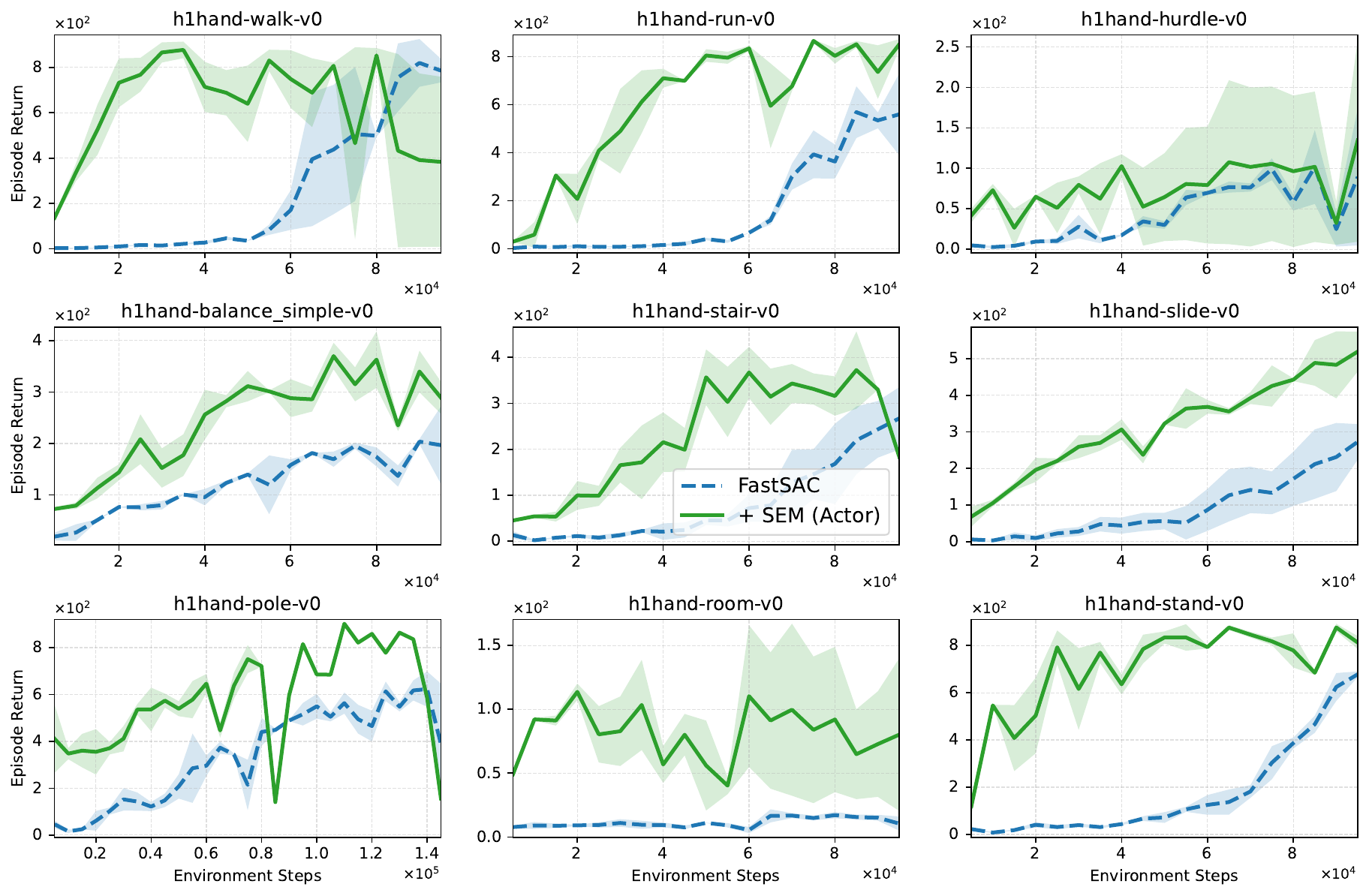}%
    \vspace{-0.4cm}
    \caption{\textbf{Learning curves on 9 \texttt{h1hand} tasks. \hlSmall{FastSAC (\textbf{blue}, - -)} vs. \hlMedium{+ SEM (Actor) (\textbf{green}, ---).}} Curves show the mean episode return across 6 seeds. Axes are independently scaled per subplot for readability. \hlMedium{SEM (Actor)} generally accelerates learning and achieves higher final returns on most tasks.
}
    \label{fig:}
    \vspace{-0.18cm}
\end{figure*}

\begin{figure*}[!h]
    \centering
    \includegraphics[width=\textwidth]{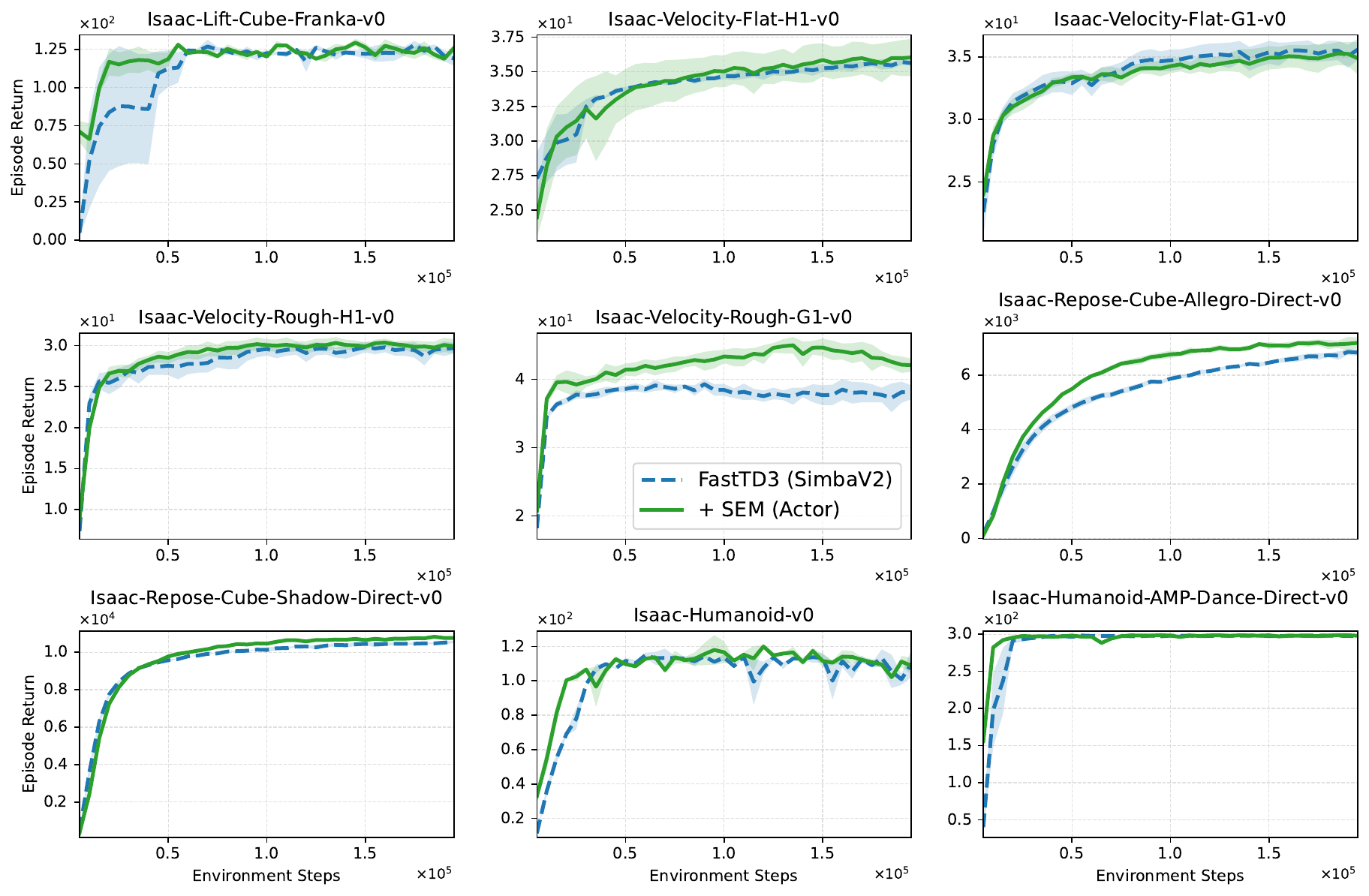}%
    \vspace{-0.4cm}
    \caption{\textbf{Learning curves on 9 IsaacGym tasks. \hlSmall{FastSAC (\textbf{blue}, - -)} vs. \hlMedium{+ SEM (Actor) (\textbf{green}, ---).}} Curves show the mean episode return across 6 seeds. Axes are independently scaled per subplot for readability. \hlMedium{SEM (Actor)} generally accelerates learning.
}
    \label{fig:fast3_isaacgym}
    \vspace{-0.18cm}
\end{figure*}

\begin{figure*}[!h]
    \centering
    \includegraphics[width=\textwidth]{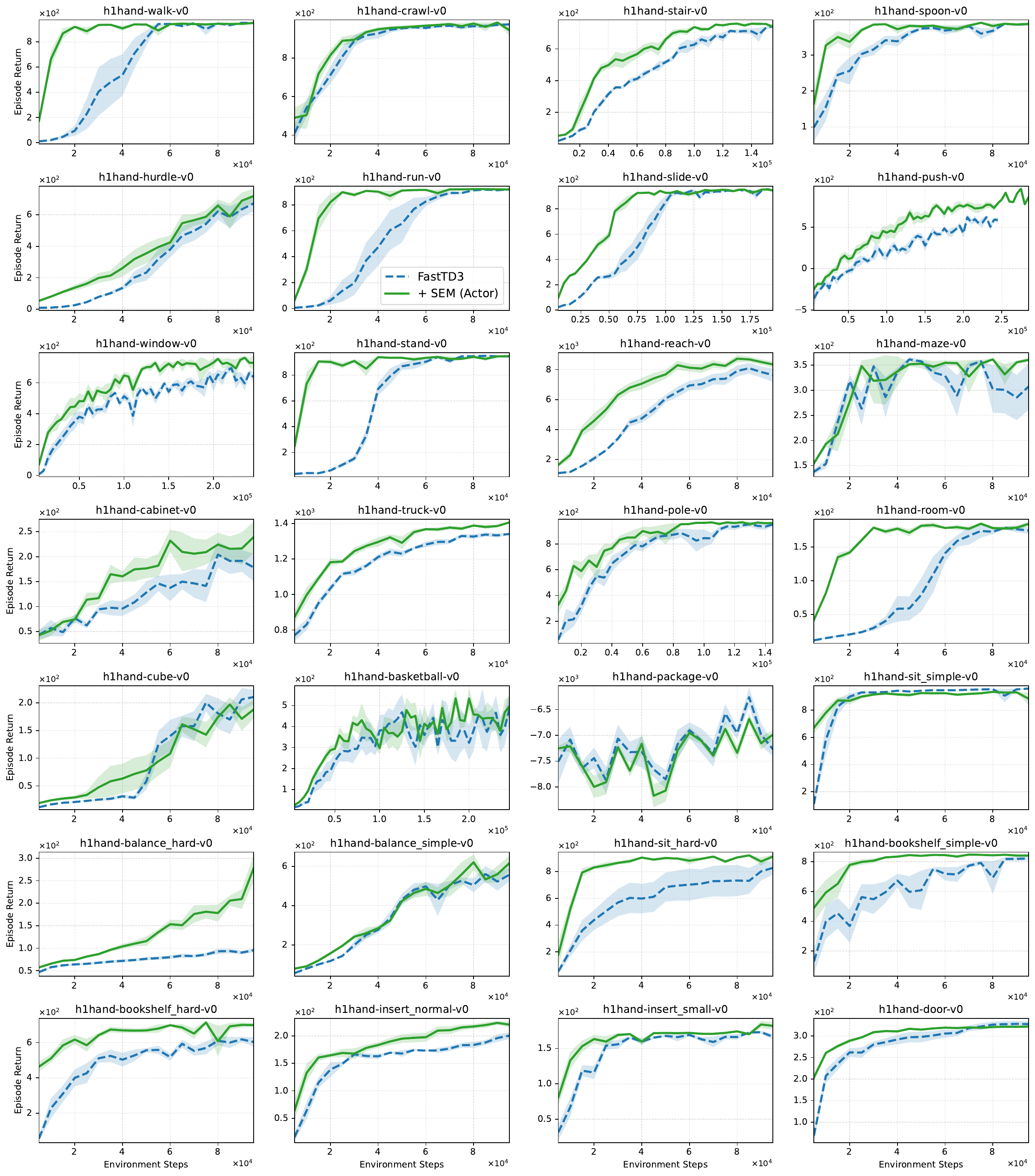}%
    \vspace{-0.4cm}
    \caption{\textbf{Learning curves on 28 \texttt{h1hand} tasks \citep{sferrazza2024humanoidbench}.  \hlSmall{FastTD3 (\textbf{blue}, - -)} vs. \hlMedium{+ SEM (Actor) (\textbf{green}, ---).}} Curves show the mean episode return across 6 seeds.  Axes are independently scaled per subplot for readability. \hlMedium{SEM (Actor)} typically achieves faster learning and equal or higher final return on most tasks.}
    \label{fig:}
    \vspace{-0.18cm}
\end{figure*}

\begin{figure*}[!h]
    \centering
    \includegraphics[width=\textwidth]{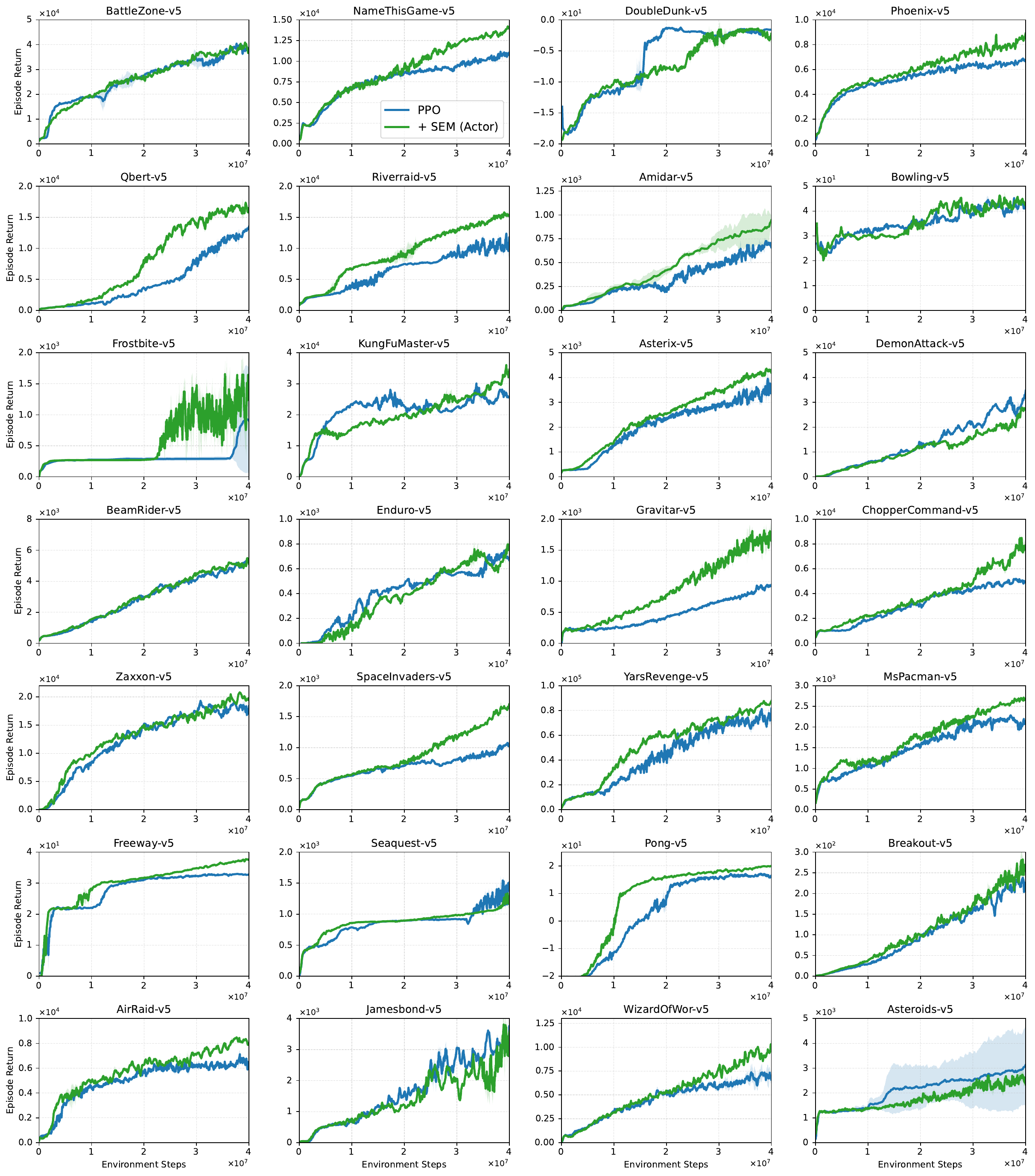}%
    \vspace{-0.4cm}
    \caption{\textbf{Learning curves on Atari game \citep{aitchison2023atari}.  \hlSmall{PPO (\textbf{blue}, - -)} vs. \hlMedium{+ SEM (Actor) (\textbf{green}, ---).}} Curves show the mean episode return across 3 seeds. \hlMedium{SEM (Actor)} typically achieves faster learning and equal or higher final return on most tasks.}
    \label{fig:atari_env_learnigcurves}
    \vspace{-0.18cm}
\end{figure*}

\clearpage
\newpage
\section{Additional Experiments on HumanoidBench}
\label{appx:learning_curves_extra}

We evaluate +SEM beyond the tasks proposed in FastTD3 by considering additional environments from the Humanoid benchmark \citep{sferrazza2024humanoidbench}. These experiments assess the scalability of SEM across different robot morphologies and task sets. We include environments featuring the H1 robot without hands and the Unitree G1 with three-finger hands.

\begin{figure*}[!h]
    \centering
    \includegraphics[width=\textwidth]{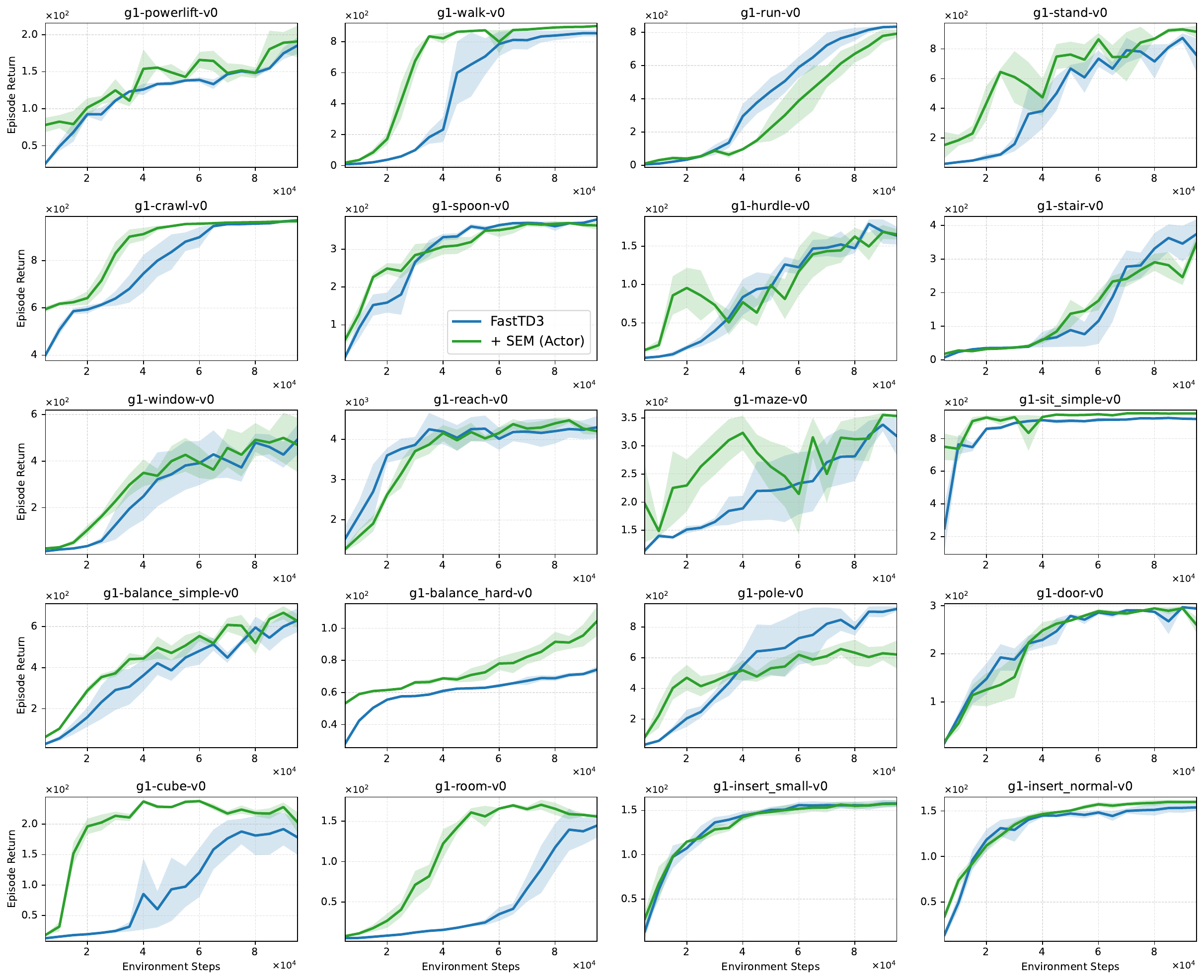}%
    \vspace{-0.4cm}
    \caption{\textbf{Learning curves on 16 \texttt{h1} tasks \citep{sferrazza2024humanoidbench}.  \hlSmall{FastTD3 (\textbf{blue}, - -)} vs. \hlMedium{+ SEM (Actor) (\textbf{green}, ---).}} Curves show the mean episode return across 6 seeds.  Axes are independently scaled per subplot for readability. \hlMedium{SEM (Actor)} typically achieves faster learning and equal or higher final return on most tasks.}
    \label{fig:h1_envs}
    \vspace{-0.18cm}
\end{figure*}

\begin{figure*}[!h]
    \centering
    \includegraphics[width=\textwidth]{figures/g1_envs.pdf}%
    \vspace{-0.4cm}
    \caption{\textbf{Learning curves on 20 \texttt{g1} tasks \citep{sferrazza2024humanoidbench}.  \hlSmall{FastTD3 (\textbf{blue}, - -)} vs. \hlMedium{+ SEM (Actor) (\textbf{green}, ---).}} Curves show the mean episode return across 6 seeds.  Axes are independently scaled per subplot for readability. \hlMedium{SEM (Actor)} typically achieves faster learning and equal or higher final return on most tasks.}
    \label{fig:g1_envs}
    \vspace{-0.18cm}
\end{figure*}

 \begin{figure*}[!t]
    \centering
    \includegraphics[width=\textwidth]{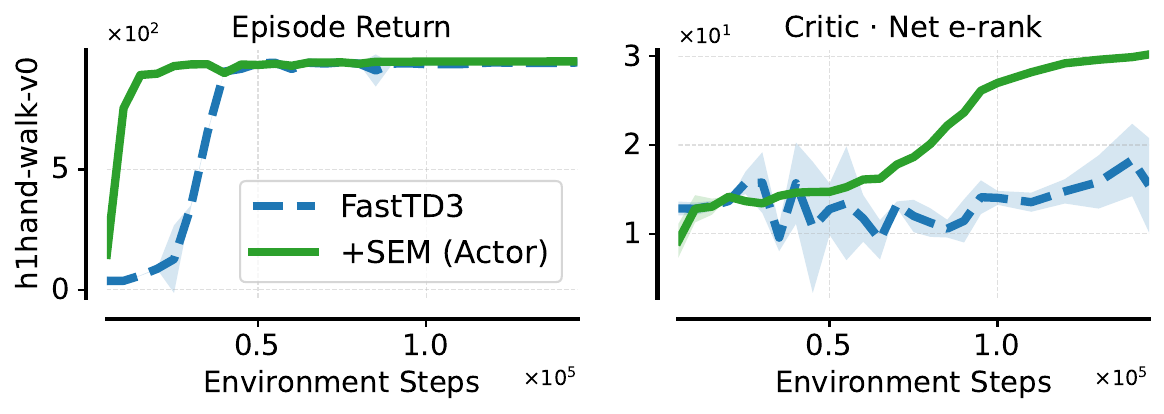}%
    \vspace{-0.4cm}
    \caption{\textbf{Learning and representation diagnostic on \texttt{h1hand-walk-v0 task.}} SEM reaches high return earlier and critic effective rank.}
    \label{fig:}
\end{figure*}

 \begin{figure*}[!t]
    \centering
    \includegraphics[width=0.9\textwidth]{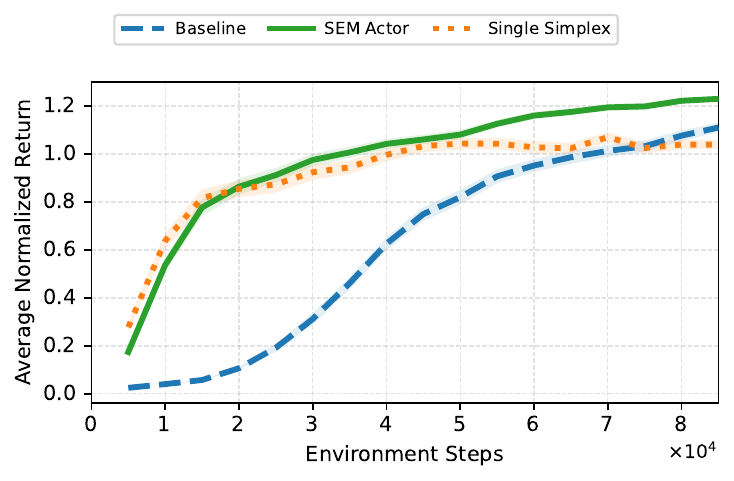}%
    \vspace{-0.4cm}
    \caption{\textbf{Average normalized return on $5$ HumanoidBench tasks over $6$ seeds.} We compare the baseline agent (\hlSmall{\textbf{blue}, -- --}) with SEM variants applied to the actor. SEM variants accelerate early learning, though a single-simplex configuration tends to plateau.}
    \label{fig:}
\end{figure*}

\clearpage
\section{Evaluating Layer-wise Interventions}
\label{appx:layers}
Selecting which layer to modify is non-trivial, as the search space grows with model depth and architectural complexity. Throughout the paper, we apply SEM to the penultimate layer, following prior work \citep{gogianu2021spectral,kumaroffline,ceron2024mixtures,sokar2025mind}. These studies highlight that the penultimate layer plays a key role in shaping and constraining learned representations in deep RL.

Here, we extend this analysis by evaluating the effect of applying SEM to individual actor layers as well as to all layers simultaneously. Following the experimental setup in \autoref{sec:understanding}, we report results in \autoref{fig:layer_64} and \autoref{fig:layer_128}. Across both settings, we observe that applying SEM to deeper layers or to the entire network leads to faster learning and higher final returns compared to intervening on early layers. Notably, applying SEM solely to Layer-3 achieves performance comparable to the full-layer intervention, indicating that modifying a single well-chosen layer is sufficient to capture most of SEM’s benefits.

 \begin{figure*}[!h]
    \centering
    \includegraphics[width=0.75\textwidth]{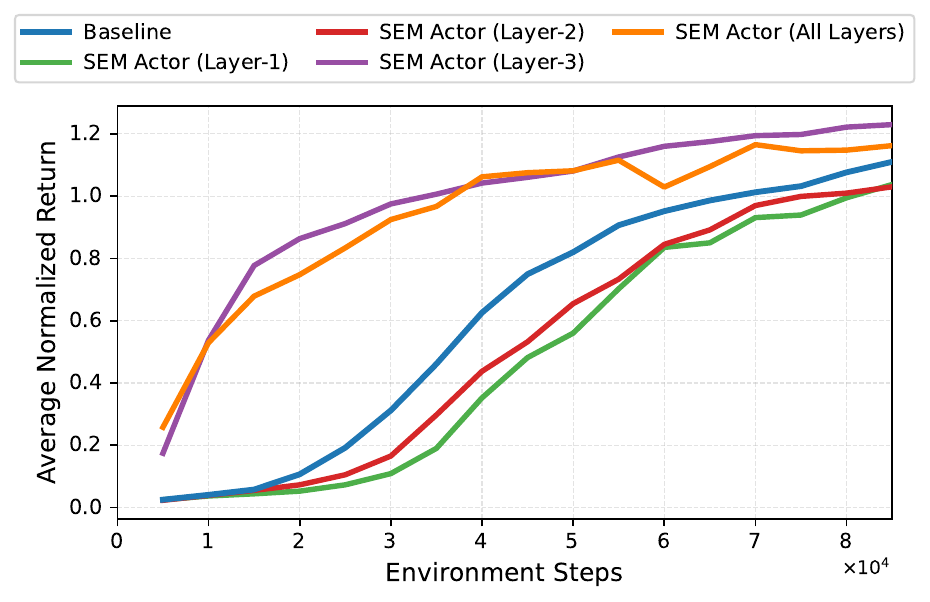}%
    \vspace{-0.4cm}
    \caption{\textbf{Average normalized return on $5$ HumanoidBench tasks over $6$ seeds.} We evaluate the effect of applying SEM to individual actor layers (Layer-1, Layer-2, Layer-3) and to all layers with $dim=64$.
    }
    \label{fig:layer_64}
\end{figure*}

 \begin{figure*}[!h]
    \centering
    \includegraphics[width=0.75\textwidth]{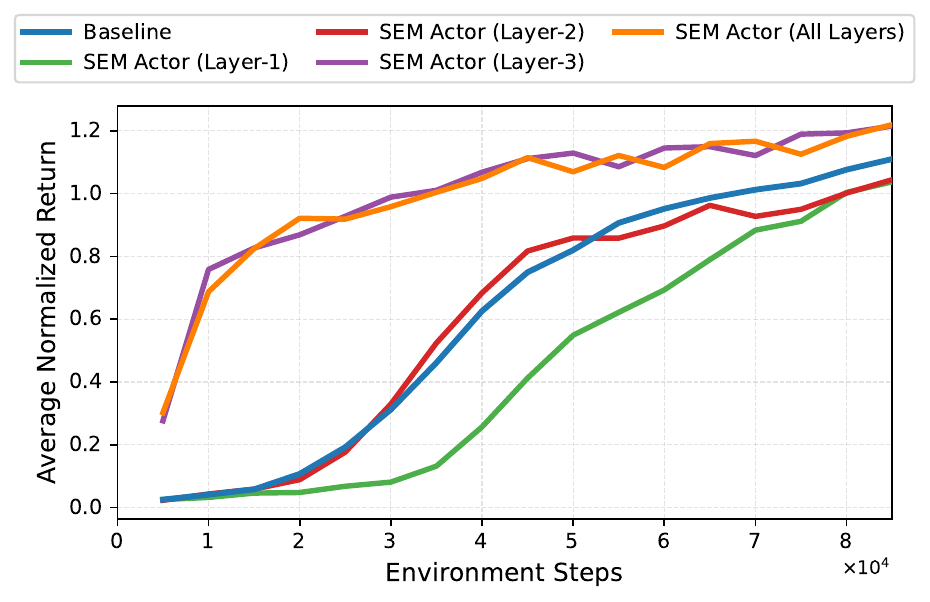}%
    \vspace{-0.4cm}
    \caption{\textbf{Average normalized return on $5$ HumanoidBench tasks over $6$ seeds.} We evaluate the effect of applying SEM to individual actor layers (Layer-1, Layer-2, Layer-3) and to all layers with $dim=128$.
    }
    \label{fig:layer_128}
\end{figure*}

\clearpage
\section{SEM on Value-Based Deep RL}
\label{appx:sem_validation}
We additionally evaluate SEM in value-based deep RL settings. Specifically, we run PQN \citep{gallicisimplifying} on $28$ Atari games following the experimental setup from \autoref{sec:results} using $3$ seeds and training for $40M$ environment steps.
As shown in \autoref{fig:attari_aggregate_pqn2}, SEM does not yield consistent improvements over the baseline when averaging performance across all games. However, examining per-game learning curves (see \autoref{fig:atari_pqn_env_learnigcurves}) reveals that SEM provides meaningful gains in both sample efficiency and final performance on several titles (e.g., Asterix, BeamRider). These results suggest that the effectiveness of SEM in value-based methods may depend on game-specific dynamics and representation structure, raising interesting research questions that we leave for future investigation.

\begin{figure*}[!h]
    \centering
    \includegraphics[width=\textwidth]{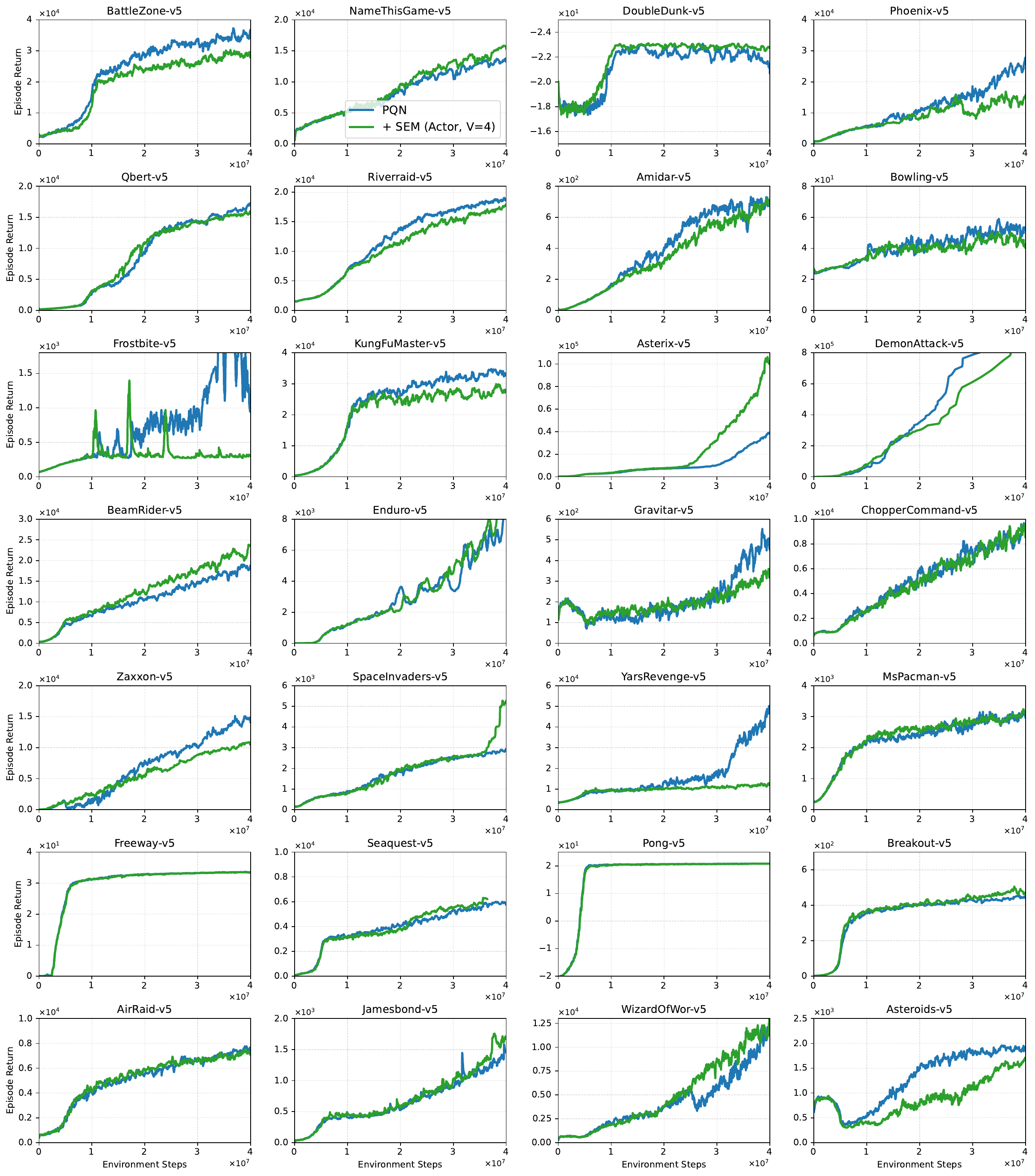}%
    \vspace{-0.4cm}
    \caption{\textbf{Learning curves on $28$ Atari games \citep{aitchison2023atari}.  \hlSmall{PQN (\textbf{blue}, - -)} \citep{lavoiesimplicial} vs. \hlMedium{+ SEM (Actor) (\textbf{green}, ---).}} Curves show the mean episode return across $3$ seeds.}
    \label{fig:atari_pqn_env_learnigcurves}
    \vspace{-0.18cm}
\end{figure*}

\begin{figure*}[!h]
    \centering
    \includegraphics[width=0.68\textwidth]{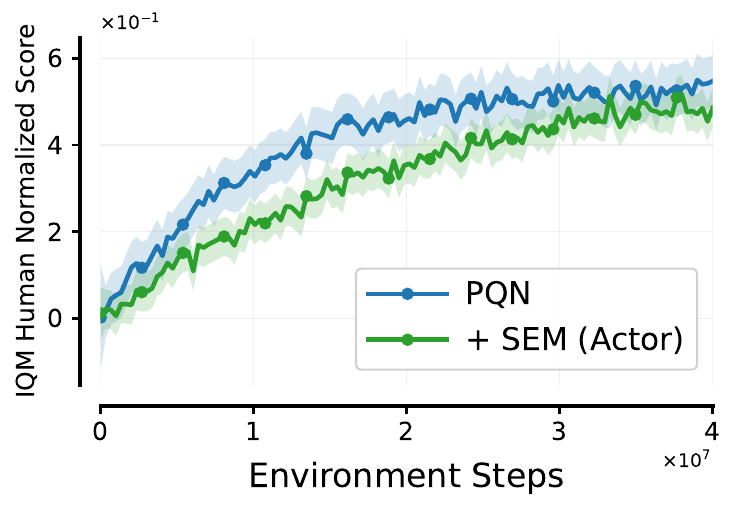}%
    \vspace{-0.4cm}
    \caption{\textbf{IQM scores computed over 40M environment steps over $18$ games, with $3$ independent runs each}, and error bars showing 95\% stratified bootstrap confidence intervals. \hlSmall{PQN (\textbf{blue}, - -)} \citep{lavoiesimplicial} vs. \hlMedium{+ SEM (Actor) (\textbf{green}, ---).}}
    \label{fig:attari_aggregate_pqn2}
    \vspace{-0.18cm}
\end{figure*}


\end{document}